\documentclass{article}

\PassOptionsToPackage{numbers, compress}{natbib}
\PassOptionsToPackage{table,dvipsnames,x11names}{xcolor}
\usepackage[preprint]{neurips_2026}

\usepackage[utf8]{inputenc}
\usepackage[T1]{fontenc}
\usepackage{hyperref}
\usepackage{url}
\usepackage{amsfonts}
\usepackage{nicefrac}
\usepackage{microtype}
\usepackage{tikz}
\usetikzlibrary{positioning,arrows.meta,shapes.geometric,calc}
\usepackage{graphicx}
\usepackage{wrapfig}
\usepackage{subcaption}
\usepackage{multirow}
\usepackage{multicol}
\usepackage{booktabs}
\usepackage[table,dvipsnames,x11names]{xcolor}
\usepackage{pifont}
\usepackage{amsmath}
\usepackage{amssymb}
\usepackage{amsthm}
\usepackage{mathtools}
\usepackage{thmtools}
\usepackage{thm-restate}
\usepackage[capitalize,noabbrev]{cleveref}

\DeclareMathOperator*{\argmax}{arg\,max}

\newcommand{\mbx}{\mathbf{x}}

\newcommand{\mbz}{\mathbf{z}}

\newcommand{\mbp}{\mathbf{p}}
\newcommand{\mbxi}{\mathbf{\xi}}
\newcommand{\mbXi}{\mathbf{\Xi}}
\newcommand{\mt}{\mathcal{T}}

\newcommand{\mbc}{\mathbf{c}}
\newcommand{\me}{\boldsymbol{\epsilon}}

\theoremstyle{plain}
\newtheorem{theorem}{Theorem}

\newtheorem{corollary}{Corollary}[theorem]
\theoremstyle{definition}
\newtheorem{definition}{Definition}
\newtheorem{assumption}{Assumption}
\newtheorem{remark}{Remark}[theorem]

\newcommand{\Att}{\texttt{At}}

\newcommand{\Kb}{\mathbf{K}}
\newcommand{\Qb}{\mathbf{Q}}
\newcommand{\Vb}{\mathbf{V}}

\newcommand{\Fc}{\mathcal{F}}
\newcommand{\Tc}{\mathcal{T}}
\newcommand{\Tca}{\mathcal{T}_\alpha}
\newcommand{\Tcal}{\mathcal{T}_\alpha^\lambda}
\newcommand{\Tcd}{\mathcal{T}_{\texttt{Dense}}}
\newcommand{\xib}{\boldsymbol{\xi}}

\newcommand{\norm}[1]{\left\lVert#1\right\rVert}

\title{Geometry-Aware Attention Guidance for Diffusion Models via Modern Hopfield Dynamics}

\author{%
 Kwanyoung Kim\\
 Department of AI Convergence, GIST \\
  \texttt{k0.kim@gist.ac.kr}
}

\begin{document}

\maketitle

\begin{figure}[h]
	\centering
	\includegraphics[width=0.9\linewidth]{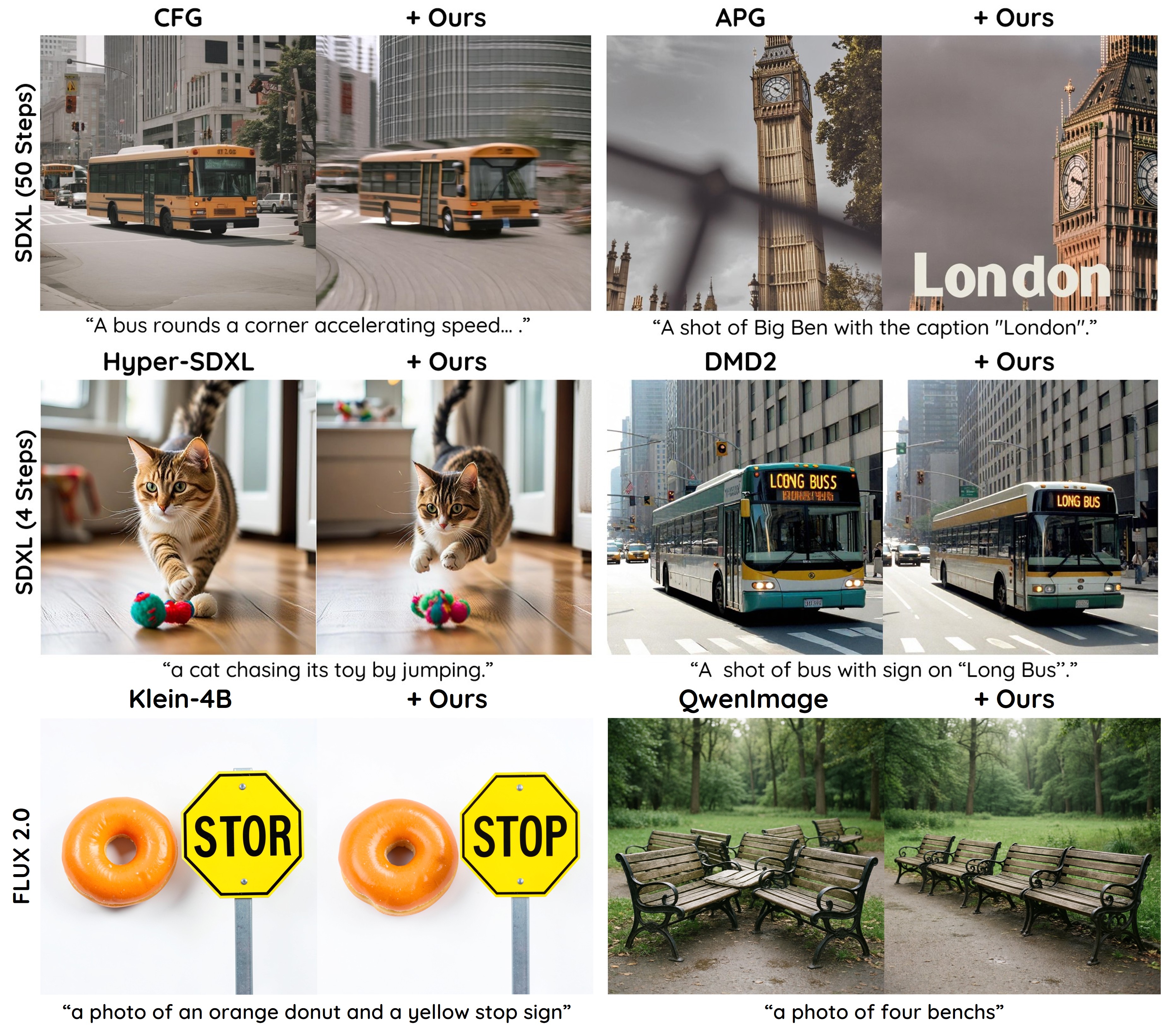}
	\vspace{-0.5em}
\caption{Effect of Ours (GAG) on guidance methods (CFG~\cite{CFG}, APG~\cite{apg}; top), step-distilled models (Hyper-SDXL~\cite{hyper}, DMD2~\cite{dmd2}; middle), and MMDiT backbones (FLUX.2~\cite{flux2024}, Qwen-Image~\cite{qwenimage}; bottom). This \textit{training-free, plug-and-play} module improves text--image alignment and visual details with negligible overhead.}
	\label{fig:main}
	\vspace{0em}
\end{figure}

\begin{abstract}
Classifier-Free Guidance (CFG) improves sample quality in diffusion models, but its dual-pass inference and reliance on null-condition training limit its use in few-step regimes. Attention-space guidance has emerged as a complementary paradigm that addresses this gap, yet why prior sparse-vs-dense attention guidance works remains elusive. We address this by analyzing attention extrapolation through Modern Hopfield dynamics, proving two directional properties of the sparse--dense discrepancy under shared conditioning that together certify it as a directionally consistent acceleration signal. Building on this, we propose \textbf{Geometry-Aware Attention Guidance (GAG)}, a training-free, plug-and-play extrapolation rule that decomposes the discrepancy into parallel and orthogonal components relative to the retrieval direction, amplifying the convergence-aligned component while suppressing off-manifold noise; stability follows from a weak contraction property. We further provide an interpretation of this extrapolation as first-order Anderson Acceleration in attention space, offering a unified perspective on attention extrapolation methods. GAG is a universal method that generalizes across architectures (UNet, MMDiT) and sampling regimes (multi-step, few-step), consistently improving generation quality on diverse backbones, including FLUX.1, the recent FLUX.2, and Qwen-Image, with minimal computational overhead.
\end{abstract}

\section{Introduction}\label{sec:intro}

The emergence of text-to-image (T2I) diffusion models~\cite{stablediffusion,sdxl, stable3} has reshaped generative modeling, with notable success on static image synthesis~\cite{sana,flux2024} and video generation~\cite{stablevideo,videocrafter2, cogvideox, wan2.1}. Vanilla sampling in diffusion~\cite{ho2020denoising} and flow-matching models~\cite{lipman2022flow} is often suboptimal, motivating a range of guidance techniques. Classifier-Free Guidance (CFG)~\cite{CFG} extrapolates between conditional and unconditional score estimates, improving quality but at substantial computational cost; it further requires dual-pass inference and null-condition training, hindering its use in few-step distilled or single-step models~\cite{light, turbo, pcm}. Training-free alternatives~\cite{PAG, SEG, tpg, asag} bypass null-condition training but still rely on two forward passes.

To address these inference-time limitations, attention-space extrapolation has emerged as a more efficient paradigm that operates inside the attention layer rather than across forward passes. It currently appears in two forms: PLADIS~\cite{pladis} contrasts \emph{sparse and dense} attention from the same prompt, while Normalized Attention Guidance (NAG)~\cite{nag} contrasts attention from \emph{positive and negative} prompts. Both extrapolate from the discrepancy between two attention operators, yet a principled account of \emph{why} this constitutes a valid update direction has been missing.

In contrast, score-space guidance has been progressively refined through \emph{directional} analysis: Adaptive Projection Guidance (APG)~\cite{apg} decomposes the conditional--unconditional contrast into parallel and orthogonal components, and Angle Domain Guidance (ADG)~\cite{adg} aligns its angular component. We therefore ask: when does an attention-space discrepancy form a meaningful update direction---both (i) \emph{accelerating} convergence to the semantic target and (ii) \emph{aligning} with it rather than introducing off-manifold drift?

We address this by modeling cross-attention through Modern Hopfield dynamics~\cite{hopfield, sparsehop, stanhop}. In this work, we focus on the shared-conditioning sparse-vs-dense attention case (PLADIS-style), where the two operators share the same $(\Qb,\Kb,\Vb)$ projections and conditioning $\mbc$, differing only in the probability mapping ($\alpha\texttt{-Entmax}$ vs.\ \texttt{Softmax}). Let $r(\mbx)$ denote the discrepancy between the sparse and dense attention outputs at query $\mbx$. We prove two \emph{directional properties} of $r(\mbx)$: (i) sparse retrieval converges at least as fast as dense retrieval toward the dominant stored pattern, and (ii) $r(\mbx)$ is non-negatively aligned with the residual to that target via probability sharpening. Together, (i) and (ii) certify $r(\mbx)$ as a directionally consistent acceleration signal. The NAG-style case, which uses different conditionings (positive vs.\ negative prompts) and thus lies outside this regime, is naturally captured as a failure mode of our framework, a prediction we verify empirically.

Building on these properties, we propose \textbf{Geometry-Aware Attention Guidance (GAG)}, which decomposes $r(\mbx)$ into components parallel and orthogonal to the sparse retrieval direction, selectively amplifying the convergence-aligned component while suppressing off-manifold noise; Figure~\ref{fig:concept} illustrates how this differs from prior score-space and full-residual paradigms. As shown in Figure~\ref{fig:main}, GAG yields visibly improved alignment and detail across diverse backbones, ranging from classical guidance and step-distilled models to the latest MMDiT/DiT architectures. We further establish stability of GAG through a weak contraction property of attention dynamics~\cite{alber1997principle, rhoades2001some}. We additionally provide an interpretation of this extrapolation as the first-order ($m{=}1$) special case of Anderson Acceleration (AA)~\cite{anderson} in attention space, offering a unified lens on attention extrapolation methods. Our contributions are summarized as follows:
\begin{itemize}
\item We provide the first directional analysis of attention-space extrapolation under shared conditioning, namely convergence rate separation and probability sharpening, and propose GAG, a geometric decomposition rule that isolates the convergence-aligned component of $r(\mbx)$ with stability under a weak contraction property.
\item We provide an interpretation of GAG as the first-order ($m{=}1$) special case of Anderson Acceleration in attention space, offering a unified lens on attention extrapolation methods.
\item We demonstrate that GAG is a training-free, plug-and-play module with strong generalizability, consistently improving generation across SDXL, FLUX.1, FLUX.2, Qwen-Image, and step-distilled regimes at negligible additional inference cost.
\end{itemize}

\begin{figure}[t]
\centering
\includegraphics[width=0.9\linewidth]{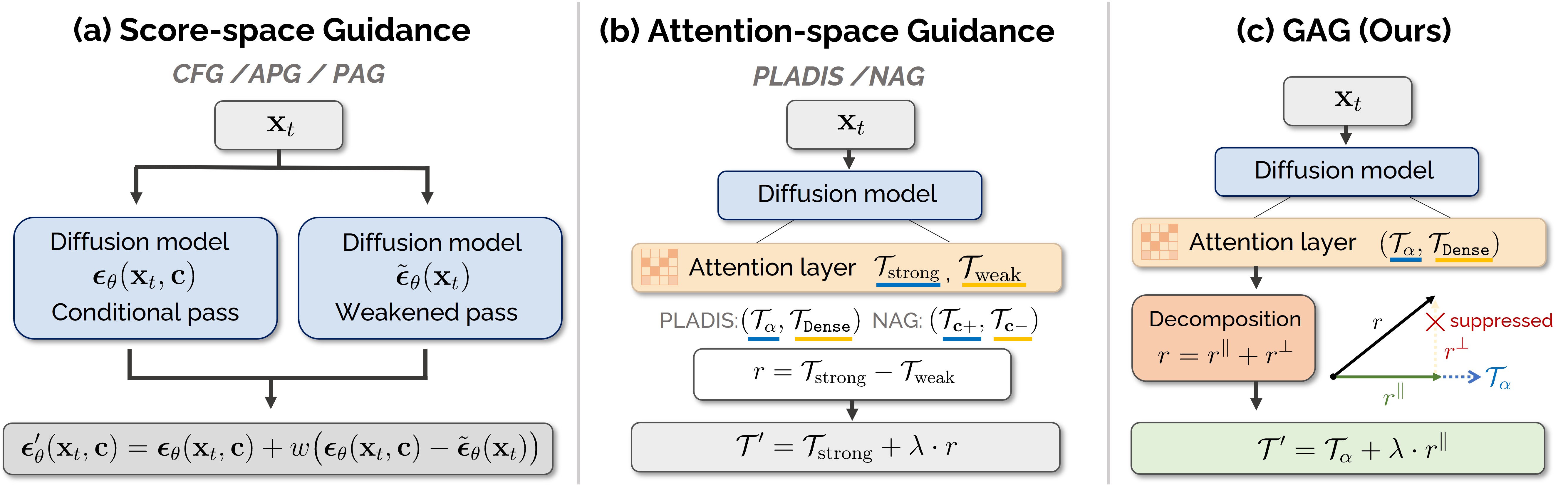}
\caption{\textbf{Conceptual comparison.} (a) Score-space guidance~\cite{CFG,apg} requires two forward passes per step; (b) attention-space guidance~\cite{pladis,nag}, while single-pass, amplifies the full residual including off-direction noise; (c) GAG decomposes the residual and amplifies only the convergence-aligned $r^{\parallel}$.}
\label{fig:concept}
\vspace{-1em}
\end{figure}

\section{Preliminary}\label{sec:prelim}

\subsection{Diffusion Models and Guidance Sampling}\label{sec:pre_dm_guide}
Diffusion models (DM)~\cite{ho2020denoising, song2021} learn to generate data by reversing a progressive Gaussian noise process. The forward process perturbs $\mbx_0 \sim q(\mbx_0)$ as $q(\mbx_t \mid \mbx_{t-1}) := \mathcal{N}(\mbx_t; \sqrt{1-\beta_t}\mbx_{t-1}, \beta_t \mathbf{I})$, with closed-form marginal $q(\mbx_t) = \mathcal{N}(\mbx_t; \sqrt{\bar{\alpha}_t}\mbx_0, (1-\bar{\alpha}_t)\mathbf{I})$ where $\bar{\alpha}_t = \prod_{i=1}^{t}(1-\beta_i)$. The reverse process $p_\theta$ is trained via denoising score matching~\cite{vincent2011connection}; for conditional generation~\cite{rombach2022high} with context $\mbc$:
\begin{equation}
\min_{\theta} \mathbb{E}_{\mbx_0, \me, \mbc} \left[ \| \me_{\theta}(\sqrt{\bar{\alpha}_t}\mbx_0 + \sqrt{1-\bar{\alpha}_t}\,\me, t, \mbc) - \me \|^2_2 \right].
\end{equation}
Sampling follows the DDIM~\cite{song2021ddim} update with the denoised estimate $\hat{\mbx}_0(t) := (\mbx_t - \sqrt{1-\bar{\alpha}_t}\,\me_\theta(\mbx_t, t, \mbc))/\sqrt{\bar{\alpha}_t}$ from Tweedie's formula~\cite{efron2011tweedie, kim2021noise2score}.

\paragraph{Score-space guidance.} Vanilla conditional sampling often yields suboptimal alignment with the conditioning, motivating \emph{guidance} methods that bias the score toward the conditional distribution. The dominant mechanism is \emph{extrapolation}: a strong conditional estimate is pushed away from a ``weakened'' counterpart, amplifying the contrast direction. Let $\me_\theta(\mbx_t,\mbc) := \me_\theta(\mbx_t, t, \mbc)$ and let $\tilde{\me}_\theta(\mbx_t)$ denote such a weakened counterpart; guidance methods then unify under
\begin{equation}
\me'_\theta(\mbx_t, \mbc) = \me_\theta(\mbx_t, \mbc) + w\bigl(\me_\theta(\mbx_t, \mbc) - \tilde{\me}_\theta(\mbx_t)\bigr), \label{eq:guidance_unified}
\end{equation}
with guidance scale $w$. For CFG~\cite{CFG}, $\tilde{\me}_\theta(\mbx_t) = \me_\theta(\mbx_t, \varnothing)$ is the unconditional estimate; ``weak-model'' guidance instead uses architectural perturbations or self-attention disruption (e.g., identity-mapping~\cite{PAG}, blurring~\cite{SEG}, Sinkhorn-based disruption~\cite{asag}). More recent methods refine the \emph{direction} of the contrast: APG~\cite{apg} decomposes it into parallel and orthogonal components, while ADG~\cite{adg} aligns its angular component. Yet all of these methods still rely on dual forward passes, limiting their use in step-distilled frameworks.

\paragraph{Attention-space guidance.} To remove the dual-pass overhead while retaining the extrapolation principle, recent studies~\cite{pladis, nag} apply the same contrast inside the attention layer rather than across forward passes, contrasting a strong attention $\Att$ with a weakened counterpart $\tilde{\Att}$:
\begin{equation}
\Att'(\Qb_t, \Kb_t, \Vb_t) = \Att(\Qb_t, \Kb_t, \Vb_t) + \lambda\bigl(\Att(\Qb_t, \Kb_t, \Vb_t) - \tilde{\Att}(\Qb_t, \Kb_t, \Vb_t)\bigr), \label{eq:attn_extra}
\end{equation}
with extrapolation scale $\lambda$. PLADIS~\cite{pladis} pairs sparse and dense attention; NAG~\cite{nag} contrasts attention from positive and negative prompts. A directional analysis of such attention-space contrasts is missing; this work aims to fill that gap.

\subsection{Energy-Based Interpretations of Attention}\label{sec:hop}
Theoretical studies have established a link between attention and Energy-Based Models, specifically Hopfield energy functions~\cite{hopfield, sparsehop}. A query $\mbx \in \mathbb{R}^{d}$ is associated with a pattern matrix $\mbXi=[\mbxi_1, \dots, \mbxi_M] \in \mathbb{R}^{d \times M}$ by minimizing an energy function $E(\mbx)$ through retrieval dynamics $\mt$. Modern Hopfield Networks (MHN)~\cite{hopfield} and Sparse Hopfield Networks (SHN)~\cite{sparsehop, stanhop} define
\begin{align}
E(\mbx)_{\texttt{Dense}} &:= -\texttt{lse}(\beta,\mbXi^{\top}\mbx) + \tfrac{1}{2}\langle\mbx,\mbx\rangle, &
\mt_{\texttt{Dense}}(\mbx) &:= \mbXi\,\texttt{Softmax}(\beta\mbXi^{\top}\mbx), \label{eq:dense_dyn}\\
E_{\alpha}(\mbx) &:= -\mathbf{\Psi}^{\star}_{\alpha}(\beta,\mbXi^{\top}\mbx) + \tfrac{1}{2}\langle\mbx,\mbx\rangle, &
\mt_{\alpha}(\mbx) &:= \mbXi\,\alpha\texttt{-Entmax}(\beta\mbXi^{\top}\mbx), \label{eq:sparse_dyn}
\end{align}
where $\texttt{lse}(\beta,\mbz) := \tfrac{1}{\beta}\log\sum_{i=1}^M \exp(\beta z_i)$ and $\mathbf{\Psi}^{\star}_{\alpha}$ is the convex conjugate of the Tsallis entropy~\cite{tsallis} $\Psi_{\alpha}$. The $\alpha\texttt{-Entmax}$ operator $\argmax_{\mbp \in \Delta^{M}}[\langle\mbp,\mbz\rangle - \Psi_{\alpha}(\mbp)]$ controls sparsity: $\alpha=1$ recovers softmax, while $\alpha\to 2$ increases sparsity; $\alpha=2$ and $\alpha=1.5$ admit exact, efficient solutions~\cite{held1974validation,michelot1986finite,entmaxx}. Crucially, SHN reduces retrieval errors and exhibits superior robustness to noise compared to dense dynamics. With $\beta = 1/\sqrt{d}$, the retrieval dynamics map directly to attention modules: $\Att(\Qb, \Kb, \Vb) = \texttt{Softmax}(\Qb\Kb^\top/\sqrt{d})\Vb$ and $\Att_\alpha(\Qb, \Kb, \Vb) = \alpha\texttt{-Entmax}(\Qb\Kb^\top/\sqrt{d})\Vb$.

Note that we focus exclusively on \textbf{cross-attention} for two reasons: (1) text-to-image alignment, our primary objective, is mediated by cross-attention layers, where text tokens act as stored memory patterns; and (2) sparsifying self-attention has been reported to degrade generation~\cite{pladis}. Inheriting the properties of Hopfield dynamics, $\Att_\alpha$ benefits from enhanced noise robustness and faster convergence, which PLADIS~\cite{pladis} leverages by adopting sparse attention as the strong component in~\eqref{eq:attn_extra}.

\subsection{Acceleration of Fixed-Point Iteration}\label{sec:acc}
For an operator $\Fc$ on a real Hilbert space $\mathcal{H}$, $\mbx_{\star}\in\mathcal{H}$ is a fixed point if $\mbx_{\star}=\Fc(\mbx_{\star})$. By the Banach fixed-point theorem~\cite{banach1922}, if $\Fc$ is a contraction, the Picard iteration $\mbx_{k+1} = \Fc(\mbx_k)$ converges to a unique fixed point. When $\Fc$ is merely nonexpansive, accelerations such as the Krasnosel'skii--Mann iteration~\cite{mann1953}, its inertial variant~\cite{mainge2008}, and Anderson Acceleration (AA)~\cite{anderson} are used. The general form of AA forms an extrapolation from the last $m+1$ iterates of $\Fc$:
\begin{equation}
\mbx_{k+1} = \sum_{i=0}^{m} \omega_i^{(k)}\,\Fc(\mbx_{k-m+i}), \qquad \sum_{i=0}^{m} \omega_i^{(k)} = 1, \label{eq:aa_general}
\end{equation}
where the coefficients $\omega_i^{(k)}$ are obtained by minimizing the residual norm. We revisit AA in Section~\ref{sec:aa_connect}; for now, the relevant fact is that one attention layer corresponds to a single Picard step of MHN dynamics~\cite{hopfield, sparsehop, ramsauer2020hopfieldnetworks}.

\section{Main Contribution}\label{sec:main}

In this section, we first formalize the retrieval dynamics of MHNs as a fixed-point iteration (Section~\ref{sec:fixed}). Building on this view, we prove two directional properties of the sparse--dense discrepancy together with a directional consistency corollary on the geometric soundness of extrapolation (Section~\ref{sec:directional}). We then introduce a geometry-aware extrapolation rule (Section~\ref{sec:gag})---contrasted with prior paradigms in Figure~\ref{fig:concept}---establish its stability via a weak contraction property (Section~\ref{sec:stable}), and show that the resulting extrapolation admits an interpretation as first-order Anderson Acceleration in attention space (Section~\ref{sec:aa_connect}).

\subsection{Retrieval Dynamics as Fixed-Point Iteration}\label{sec:fixed}
We revisit the retrieval dynamics $\mt$ in MHNs~\cite{hopfield, sparsehop}, which are fundamentally Picard iterations $\mbx_{k+1}=\mt(\mbx_k)$ designed to find a stationary point $\mbx_{\star}$ of the energy function representing a stored memory. This relationship is formalized below.
\begin{theorem}[Generalized Sparse Hopfield Retrieval Dynamics~\cite{hopfield,sparsehop, stanhop}]\label{thm:fp}
The retrieval dynamics of the generalized sparse Hopfield model is a monotonic one-step update:
\begin{equation}
\mbx_{k+1} = \mt(\mbx_k) := \mbXi\,\alpha\texttt{-Entmax}(\beta\mbXi^{\top}\mbx_k).
\end{equation}
\end{theorem}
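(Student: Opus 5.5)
The theorem says two things: the update $\mt$ is a fixed-point (Picard) step, and it is monotone, meaning the energy $E_\alpha$ from \eqref{eq:sparse_dyn} does not increase along $\mbx_{k+1}=\mt(\mbx_k)$. I will prove monotonicity with a concave--convex (CCCP) majorization argument, as in the sparse Hopfield literature. Write $E_\alpha(\mbx)=E_{\mathrm{vex}}(\mbx)+E_{\mathrm{cave}}(\mbx)$ with
\[
E_{\mathrm{vex}}(\mbx)=\tfrac12\langle\mbx,\mbx\rangle, \qquad E_{\mathrm{cave}}(\mbx)=-\mathbf{\Psi}^{\star}_{\alpha}(\beta,\mbXi^{\top}\mbx).
\]

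\textbf{Step 1: concavity of $E_{\mathrm{cave}}$ and its gradient.} $\mathbf{\Psi}^\star_\alpha$ is the convex conjugate of the Tsallis entropy $\Psi_\alpha$, which is strictly convex on the simplex. So $\mathbf{\Psi}^\star_\alpha$ is convex, and $\mbz\mapsto\mathbf{\Psi}^\star_\alpha(\beta,\mbz)$ stays convex after the $\beta$-scaling. Composing with the linear map $\mbx\mapsto\mbXi^\top\mbx$ preserves convexity, hence $E_{\mathrm{cave}}$ is concave. By Danskin's theorem, equivalently the conjugate-gradient identity $\nabla\Psi^\star=\argmax$, we get $\nabla_{\mbz}\mathbf{\Psi}^\star_\alpha(\beta,\mbz)=\alpha\texttt{-Entmax}(\beta\mbz)$. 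The maximizer is unique by strict convexity of $\Psi_\alpha$, so $\mathbf{\Psi}^\star_\alpha$ is differentiable. The chain rule then gives
\[
\nabla E_{\mathrm{cave}}(\mbx)=-\mbXi\,\alpha\texttt{-Entmax}(\beta\mbXi^\top\mbx).
\]
I expect this step to be the main obstacle. The $\beta$ normalization inside $\mathbf{\Psi}^\star_\alpha(\beta,\cdot)$ must be matched so that the gradient is exactly $\alpha\texttt{-Entmax}(\beta\cdot)$ with no stray factor; I would fix the convention $\mathbf{\Psi}^\star_\alpha(\beta,\mbz)=\beta^{-1}\Psi^\star_\alpha(\beta\mbz)$, which mirrors $\texttt{lse}$. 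Differentiability also needs care where the support of the entmax changes; uniqueness of the maximizer handles this.

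\textbf{Step 2: the CCCP update equals $\mt$.} CCCP linearizes the concave part at $\mbx_k$ and minimizes the resulting convex surrogate
\[
\tfrac12\|\mbx\|^2+E_{\mathrm{cave}}(\mbx_k)+\langle\nabla E_{\mathrm{cave}}(\mbx_k),\mbx-\mbx_k\rangle .
\]
Setting its gradient to zero gives $\mbx=-\nabla E_{\mathrm{cave}}(\mbx_k)=\mbXi\,\alpha\texttt{-Entmax}(\beta\mbXi^\top\mbx_k)=\mt(\mbx_k)$.

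\textbf{Step 3: monotone descent.} Concavity makes the linearization a global upper bound on $E_{\mathrm{cave}}$, so the surrogate majorizes $E_\alpha$ and is tight at $\mbx_k$. Since $\mbx_{k+1}$ minimizes the surrogate,
\[
E_\alpha(\mbx_{k+1})\le \mathrm{surrogate}(\mbx_{k+1})\le \mathrm{surrogate}(\mbx_k)=E_\alpha(\mbx_k).
\]
The iterates lie in the convex hull of the patterns, so $E_\alpha$ is bounded below there and the energy sequence converges. By the Zangwill / CCCP convergence theorem, limit points are fixed points $\mbx_\star=\mt(\mbx_\star)$, and these are exactly the stationary points of $E_\alpha$. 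Setting $\alpha=1$ recovers \eqref{eq:dense_dyn} with $\texttt{lse}$ and Softmax.
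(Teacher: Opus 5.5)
Your CCCP argument is correct and is essentially the proof behind this statement: the paper gives no proof of its own and imports the result from the cited Hopfield works, which likewise split the energy into the convex quadratic and the concave $-\mathbf{\Psi}^\star_\alpha(\beta,\mbXi^\top\mbx)$ term, identify $\mt$ as the CCCP step, and obtain monotone energy descent and convergence to fixed points via Zangwill, exactly as you do. Your reading of $\Psi_\alpha$ as the strictly convex (negative Tsallis) regularizer and your normalization $\mathbf{\Psi}^\star_\alpha(\beta,\mbz)=\beta^{-1}\Psi^\star_\alpha(\beta\mbz)$ are the conventions needed to make the paper's loosely stated definitions consistent. As written, the paper's $\Psi_\alpha$ is concave, which would make its argmax reduce to a hard argmax rather than $\alpha\texttt{-Entmax}$.
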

This implies that attention dynamics are the mechanism for finding fixed points $\mbx_{\star}$ in MHNs, where the standard attention layer performs exactly one iteration of the update rule. Equivalently, with $\beta = 1/\sqrt{d}$, $\mt_{(\cdot)}(\mbx) \to \Att_{(\cdot)}(\Qb, \Kb, \Vb)$, where $\mbx$ acts as the query and $(\Kb, \Vb)$ as stored memories. This fixed-point view sets the stage for analyzing the discrepancy between two retrieval operators that share the same memories but differ in their probability mapping.

\subsection{Directional Properties of the Sparse--Dense Discrepancy}\label{sec:directional}

The fixed-point view of Section~\ref{sec:fixed} lets us compare two retrieval operators that share the same stored memories but differ in their probability mapping. We now ask: what is the geometric structure of their discrepancy, and when does it serve as a useful acceleration direction?

\paragraph{Setup.} Let $\Tca$ and $\Tcd$ be instantiated from the same model with identical $(\Qb, \Kb, \Vb)$ and conditioning $\mbc$, differing only in the probability mapping ($\alpha\texttt{-Entmax}$ vs.\ \texttt{Softmax}). Both outputs lie in $\mathrm{span}(\Vb)$, and we define the discrepancy
\begin{equation}
r(\mbx) \;:=\; \Tca(\mbx) - \Tcd(\mbx) \;=\; \Vb\bigl(\mbp_\alpha(\mbx) - \mbp_{\texttt{dense}}(\mbx)\bigr), \label{eq:rdef}
\end{equation}
where $\mbp_\alpha, \mbp_{\texttt{dense}}\in\Delta^M$. We use $r(\mbx)$ throughout as the unified notation for this discrepancy.

\begin{theorem}[Directional Surrogate Property]\label{thm:directional}
Under the setup above, let $\xib_\mu$ be the dominant stored pattern. Then $r(\mbx)$ satisfies:
\begin{enumerate}
\item[(i)] \textbf{Convergence rate separation.} The sparse retrieval error is no larger than the dense:
\begin{equation}
\norm{\Tca(\mbx) - \xib_\mu} \;\le\; \norm{\Tcd(\mbx) - \xib_\mu}, \label{eq:rate_sep}
\end{equation}
following the retrieval-error bounds of~\cite{sparsehop, stanhop} (restated in Theorem~B.2, Appendix).
\item[(ii)] \textbf{Directional alignment via probability sharpening.} Under well-separated patterns, the simplex constraint $\sum_i \delta_i=0$ ($\boldsymbol{\delta} := \mbp_\alpha - \mbp_{\texttt{dense}}$) and the support-shrinking property of $\alpha\texttt{-Entmax}$ yield $\delta_\mu \ge 0$, $\delta_j \le 0$ ($j\ne\mu$), and
\begin{equation}
\bigl\langle r(\mbx),\, \xib_\mu - \Tcd(\mbx) \bigr\rangle \;\ge\; 0. \label{eq:dir_align}
\end{equation}
\end{enumerate}
\end{theorem}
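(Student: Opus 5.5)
Part (i) is essentially a citation. The plan is to restate the retrieval-error bounds of \cite{sparsehop,stanhop} (Theorem~B.2) in the form
\[
\norm{\Tca(\mbx)-\xib_\mu} \le \norm{\Tcd(\mbx)-\xib_\mu},
\]
which holds because the sparse error bound is dominated pointwise by the dense one. To make this self-contained, I will write
\[
\Tc(\mbx)-\xib_\mu=\sum_{j\neq\mu}p_j(\xib_j-\xib_\mu),
\]
using $\sum_i p_i=1$. Both errors are then controlled by the off-target mass $1-p_\mu$, weighted by pattern differences. The key input is that $\alpha\texttt{-Entmax}$ with $\alpha>1$ puts at least as much mass on the argmax coordinate as softmax and no more mass on any other coordinate. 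That is exactly the sign pattern of part (ii). So I will prove the sign pattern first and derive (i) from it together with a well-separation assumption.

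For the sign pattern, I will set $\mbz=\beta\mbXi^\top\mbx$ with $\mu=\argmax_i z_i$. I will use the threshold form of entmax:
\[
p_{\alpha,i}=[(\alpha-1)z_i-\tau]_+^{1/(\alpha-1)},
\]
with $\tau$ fixed by normalization. Softmax is $p_{\texttt{dense},i}\propto e^{z_i}$. Both maps are monotone in $z_i$, and the ratio $p_{\alpha,i}/p_{\texttt{dense},i}$ is nonincreasing in the gap $z_\mu-z_i$. Entmax decays polynomially, and to zero, while softmax decays exponentially but never vanishes. Under well-separation, meaning the gap to all $j\neq\mu$ is large enough, each non-dominant entmax weight falls below the corresponding softmax weight. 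Entmax zeros out coordinates beyond its support, so $\delta_j\le 0$ for $j\ne\mu$. The constraint $\sum_i\delta_i=0$ then forces $\delta_\mu=-\sum_{j\ne\mu}\delta_j\ge0$.

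This is the step I expect to be the main obstacle. Without separation the claim can fail for moderately ranked coordinates, so I will state explicitly the separation condition $z_\mu-z_j\ge\Delta(\alpha)$ needed for the pointwise comparison. If this proves messy, my first fallback is to use the large-separation regime, where entmax is exactly one-hot and the claim is trivial.

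For the alignment inequality, I will write
\[
r(\mbx)=\sum_i\delta_i\xib_i=\sum_{j\ne\mu}|\delta_j|(\xib_\mu-\xib_j),
\qquad
\xib_\mu-\Tcd(\mbx)=\sum_{k\ne\mu}p_{\texttt{dense},k}(\xib_\mu-\xib_k).
\]
The inner product is then a nonnegative combination of the terms $\langle\xib_\mu-\xib_j,\xib_\mu-\xib_k\rangle$. By polarization each term equals
\[
\tfrac12\bigl(\norm{\xib_\mu-\xib_j}^2+\norm{\xib_\mu-\xib_k}^2-\norm{\xib_j-\xib_k}^2\bigr).
\]
Under well-separated patterns, for instance roughly equinormed and near-orthogonal ones, this is nonnegative, which yields \eqref{eq:dir_align}.

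Finally, part (i) follows the same way. I will write
\[
\Tcd-\xib_\mu=(\Tca-\xib_\mu)-r,
\]
expand $\norm{\Tcd-\xib_\mu}^2$, and use the analogous nonnegativity of $\langle r,\xib_\mu-\Tca\rangle$, obtained by the same polarization argument.
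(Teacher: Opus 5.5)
Your route differs from the paper's, and under the hypotheses you make explicit it is sound.

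\textbf{How the paper argues.} It gets (i) purely by citing \eqref{eq:app_ineq_mono} with $\alpha'=1$. It treats the sign pattern of $\boldsymbol{\delta}$ as immediate from support shrinking. It then proves \eqref{eq:dir_align} from the margin $\rho$ and Cauchy--Schwarz, closing with large $\beta$ via the explicit threshold $\beta>\frac{1}{\Delta}\log\frac{4m(M-1)}{\rho}$.

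\textbf{How you argue.} You prove the sign pattern. You write $r(\mbx)$, $\xib_\mu-\Tcd(\mbx)$ and $\xib_\mu-\Tca(\mbx)$ as nonnegative combinations of the $\xib_\mu-\xib_k$. You use the pairwise condition $\langle\xib_\mu-\xib_j,\xib_\mu-\xib_k\rangle\ge 0$ in place of $\rho$. You obtain (i) from $\langle r(\mbx),\xib_\mu-\Tca(\mbx)\rangle\ge 0$ rather than citing it.

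\textbf{What each route buys.} The paper's route is shorter and ties everything to one margin and one $\beta$-threshold. However, its step $\langle r(\mbx),\xib_\mu-\Tcd(\mbx)\rangle\ge\delta_\mu\rho-\norm{\boldsymbol{\delta}}_1\norm{\Tcd(\mbx)-\xib_\mu}$ does not hold as written. The $\rho$-term really carries a factor $1-(\mbp_{\texttt{dense}})_\mu$. For two orthonormal patterns, sparsemax and $\mbp_{\texttt{dense}}=(0.9,0.1)$, the left side is $0.02$ while the claimed bound is about $0.07$. Your argument is the one that actually closes.

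\textbf{Your ratio claim is correct, and it gives more than you use.} On the support, $p_{\alpha,i}/p_{\texttt{dense},i}\propto(u_ie^{-u_i})^{1/(\alpha-1)}$ with $u_i=(\alpha-1)z_i-\tau\in[0,1]$, since $p_{\alpha,i}\le 1$. The $\mbp_{\texttt{dense}}$-weighted mean of these ratios is $1$. Hence $\delta_\mu\ge 0$ holds unconditionally, and $\delta_j\le 0$ for all $j\ne\mu$ reduces to checking the runner-up. Entmax is exactly one-hot once $z_\mu-z_{(2)}\ge 1/(\alpha-1)$. In that regime you need no polarization: $\Tca(\mbx)=\xib_\mu$, so $r(\mbx)=\xib_\mu-\Tcd(\mbx)$, and (i) and (ii) are immediate.

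\textbf{The price, and why it is unavoidable.} Your (i) holds only under the separation hypotheses, whereas the paper asserts it at every query. No unconditional version is available. Take sparsemax, $\beta=1$, $\xib_1=(1,0)$, $\xib_2=(0,1)$, $\xib_3=(0.9,0)$ and $\mbx=(1,0.99)$. Then $\mbp_\alpha=(0.37,0.36,0.27)$ and $\mbp_{\texttt{dense}}\approx(0.345,0.342,0.313)$. So $\delta_2>0$ and $\norm{\Tca(\mbx)-\xib_1}\approx 0.529>0.506\approx\norm{\Tcd(\mbx)-\xib_1}$, even though your pairwise condition holds. The sign pattern is therefore, as you anticipated, the real crux, for (i) as much as for (ii).
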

\begin{proof}[Proof]
See Appendix~\ref{app:directional_proof}.
\end{proof}

The probability-sharpening structure in (ii) follows directly from the definition of $\alpha\texttt{-Entmax}$, so $r(\mbx) = \Vb\boldsymbol{\delta}$ is intrinsically a sharpening direction toward $\xib_\mu$; the well-separation condition is used only to obtain the formal non-negativity bound (restated quantitatively in Appendix~\ref{app:directional_proof}).

\begin{wrapfigure}[11]{r}{0.3\linewidth}
\vspace{-0.8em}
\centering
\includegraphics[width=\linewidth]{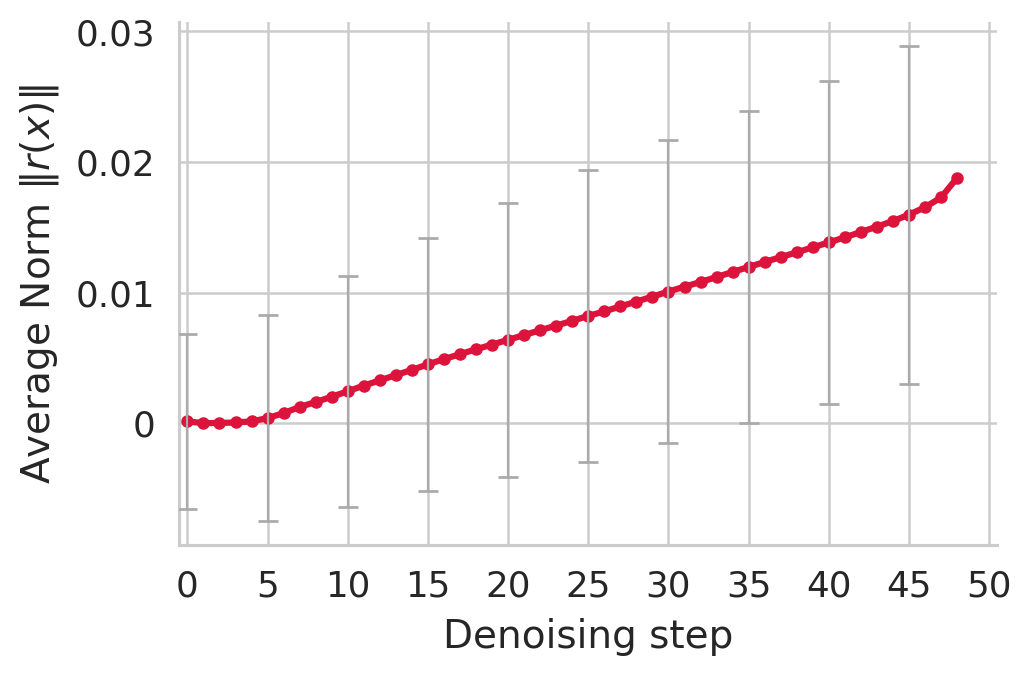}
\caption{$\norm{r(\mbx_t)}$ along the denoising trajectory: small at high noise, growing with localization.}
\label{fig:norm_traj}
\vspace{-1.0em}
\end{wrapfigure}

Theorem~\ref{thm:directional} yields a structural corollary on the geometric soundness of extrapolating along $r(\mbx)$.

\begin{corollary}[Directional Consistency]\label{cor:dirconsist}
Let $\Tc'(\mbx) := \Tca(\mbx) + \lambda\, r(\mbx)$ with $\lambda\ge 0$. Under the setup of Theorem~\ref{thm:directional}:
\begin{enumerate}
\item[(a)] \textbf{Direction preservation.} $r(\mbx) \in \mathrm{span}(\Vb)$ and is non-negatively aligned with $\xib_\mu - \Tcd(\mbx)$, so $\Tc'(\mbx)\in\mathrm{span}(\Vb)$ with no off-subspace drift.
\item[(b)] \textbf{Common attractor at $\xib_\mu$.} For sufficiently large $\beta$ and well-separated patterns, $\mbp_\alpha(\mbx^*)$ and $\mbp_{\texttt{dense}}(\mbx^*)$ both concentrate on the dominant entry as $\mbx^*\to\xib_\mu$, so $r(\mbx^*)\to\mathbf{0}$ and $\Tc'(\mbx^*)\to\xib_\mu$.
\end{enumerate}
\end{corollary}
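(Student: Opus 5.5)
The corollary follows from Theorem~\ref{thm:directional} together with elementary facts about the two probability maps, so I would prove (a) and (b) separately, each in a few lines.

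For (a), I would start from the definition \eqref{eq:rdef}: $r(\mbx)=\Vb\boldsymbol{\delta}$ with $\boldsymbol{\delta}=\mbp_\alpha-\mbp_{\texttt{dense}}$, so $r(\mbx)\in\mathrm{span}(\Vb)$ immediately. Also $\Tca(\mbx)=\Vb\mbp_\alpha(\mbx)\in\mathrm{span}(\Vb)$. The subspace is closed under linear combinations, so $\Tc'(\mbx)=\Vb\bigl((1+\lambda)\mbp_\alpha-\lambda\mbp_{\texttt{dense}}\bigr)\in\mathrm{span}(\Vb)$ for every $\lambda\ge0$. The non-negative alignment is exactly \eqref{eq:dir_align} of Theorem~\ref{thm:directional}(ii), invoked under the same well-separation hypothesis. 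I would add one remark: since $\lambda\ge0$, it follows that $\langle \Tc'(\mbx)-\Tca(\mbx),\,\xib_\mu-\Tcd(\mbx)\rangle=\lambda\langle r(\mbx),\xib_\mu-\Tcd(\mbx)\rangle\ge0$. So the extrapolation step never points against the residual, which is the sense of ``direction preservation''.

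For (b), I would work in the identification where the stored patterns are the columns of $\Vb$ (the Hopfield setup with $\mbXi=\Vb$), take $\mbx^*\to\xib_\mu$, and write $\mbz=\beta\mbXi^\top\mbx^*$.
\begin{enumerate}
\item[1.] Use well-separation: there is a margin $\Delta_\mu:=\langle\xib_\mu,\xib_\mu\rangle-\max_{j\ne\mu}\langle\xib_\mu,\xib_j\rangle>0$. By continuity, the same margin minus $o(1)$ holds for $\mbx^*$ near $\xib_\mu$.
\item[2.] Bound softmax: $1-[\mbp_{\texttt{dense}}]_\mu\le (M-1)e^{-\beta(\Delta_\mu-o(1))}$, which tends to $0$ as $\beta\to\infty$.
\item[3.] Handle $\alpha$-entmax: use the threshold characterization $[\mbp_\alpha]_i=[(\alpha-1)z_i-\tau]_+^{1/(\alpha-1)}$. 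Once the gap $z_\mu-z_j$ exceeds a constant depending only on $\alpha$ (of order $(\alpha-1)^{-1}$), the support collapses to $\{\mu\}$ and $\mbp_\alpha=\mathbf{e}_\mu$ exactly. Because $\beta\Delta_\mu\to\infty$, this happens for large $\beta$. Alternatively, Theorem~\ref{thm:directional}(ii) gives $\delta_\mu\ge0$, so $[\mbp_\alpha]_\mu\ge[\mbp_{\texttt{dense}}]_\mu\to1$.
\item[4.] Conclude: $\|r(\mbx^*)\|\le\|\Vb\|\,\|\boldsymbol{\delta}\|_1\le 2\|\Vb\|\,(1-[\mbp_{\texttt{dense}}]_\mu)\to0$. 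Then $\Tca(\mbx^*)=\Vb\mbp_\alpha\to\Vb\mathbf{e}_\mu=\xib_\mu$, so $\Tc'(\mbx^*)\to\xib_\mu$ for any fixed $\lambda$.
\end{enumerate}

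The main obstacle is step 3, and more precisely making the joint limit ($\beta$ large and $\mbx^*\to\xib_\mu$) rigorous with a uniform margin. I would fix a neighbourhood of $\xib_\mu$ on which the margin is at least $\Delta_\mu/2$, and then apply the explicit softmax bound. For the sparse side I would rely on the monotonicity $\delta_\mu\ge0$ from Theorem~\ref{thm:directional}(ii), which avoids the entmax threshold computation entirely. A further point to check is that $\xib_\mu$ is itself (approximately) a fixed point of both $\Tca$ and $\Tcd$, so calling it a ``common attractor'' is justified. This follows from the same estimates, since $\Tcd(\xib_\mu)-\xib_\mu=O(e^{-\beta\Delta_\mu})$, consistent with the retrieval-error bounds cited in Theorem~\ref{thm:directional}(i).
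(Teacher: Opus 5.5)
Your proof is correct. Part (a) matches the paper's argument: closure of $\mathrm{span}(\Vb)$ plus Theorem~\ref{thm:directional}(ii).

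For (b) you take a different route. The paper argues in one line at the level of retrieval errors: the bounds of Theorem~\ref{thm:retrieval_err} vanish, so $\Tca(\mbx),\Tcd(\mbx)\to\xib_\mu$. You work directly with the probability vectors: an explicit softmax tail bound, support collapse of $\alpha$-entmax once every logit gap exceeds $1/(\alpha-1)$, and then $\norm{r(\mbx^*)}\le\max_i\norm{\xib_i}\,\norm{\boldsymbol{\delta}}_1$. This buys three things.
\begin{enumerate}
\item[(1)] It proves the statement as phrased, namely that $\mbp_\alpha$ and $\mbp_{\texttt{dense}}$ concentrate on the dominant entry, not only the consequence for the outputs.
\item[(2)] Your uniform-margin neighbourhood makes explicit that at fixed $\beta$ you only get $\norm{r(\mbx^*)}=O(e^{-\beta\Delta_\mu/2})$. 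So ``$r\to\mathbf{0}$'' is really a large-$\beta$ statement, which the paper leaves implicit.
\item[(3)] It avoids a soft spot in the paper's sentence. The sparse bounds \eqref{eq:app_ineq_new1} and \eqref{eq:app_ineq_sparse} are at least $m$ and never tend to zero. The paper's $\Tca(\mbx)\to\xib_\mu$ must therefore come from the dense bound \eqref{eq:app_ineq_hop} together with the monotonicity \eqref{eq:app_ineq_mono}, i.e.\ Theorem~\ref{thm:directional}(i). Your collapse computation gives $\mbp_\alpha=\mathbf{e}_\mu$, hence $\Tca(\mbx^*)=\xib_\mu$ exactly, with no citation.
\end{enumerate}
The paper's route is shorter and reuses the cited machinery.

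Of your two options for the sparse side, prefer the support-collapse one, because it is self-contained. The coordinatewise sign pattern of Theorem~\ref{thm:directional}(ii) is not automatic outside the collapsed regime. For example, a large tail of low logits that softmax weights but $\alpha$-entmax zeroes can make $\delta_j>0$ for a runner-up index $j$.

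One bookkeeping point: the constant in your step-4 bound $\norm{\boldsymbol{\delta}}_1\le 2\bigl(1-[\mbp_{\texttt{dense}}]_\mu\bigr)$ needs either the collapse or the full sign pattern. From $\delta_\mu\ge 0$ alone, the identity $\norm{\boldsymbol{\delta}}_1=2\sum_i(\delta_i)_+$ gives $4\bigl(1-[\mbp_{\texttt{dense}}]_\mu\bigr)$ instead. This is equally good for the limit.
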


Property~(a) is immediate from~\eqref{eq:rdef} and Theorem~\ref{thm:directional}(ii); (b) follows from the retrieval-error bounds in Theorem~\ref{thm:retrieval_err} (full proof in Appendix~\ref{app:cor_proof}). Empirically, this shared-attractor behavior matches the $\norm{r(\mbx_t)}$ trajectory in Figure~\ref{fig:norm_traj}: near zero at high noise, growing as semantic structure emerges, consistent with two operators sharing a target but differing in convergence speed.

\begin{remark}[Failure mode: non-shared conditioning]\label{rem:nag}
The directional guarantees of Theorem~\ref{thm:directional} rely on $\Tca$ and $\Tcd$ sharing the same $(\Qb,\Kb,\Vb)$ and conditioning $\mathbf{c}$. NAG~\cite{nag} contrasts attention from \emph{different} conditioning ($\mathbf{c}_{\mathrm{pos}}$ vs.\ $\mathbf{c}_{\mathrm{neg}}$), so the two operators receive different $\mathbf{V}$, do not share a span, and have distinct semantic attractors. Theorem~\ref{thm:directional} therefore does not apply, and the resulting residual is not guaranteed to be aligned with any single $\xib_\mu$. This is consistent with the empirical degradation we observe when GAG is composed with NAG (Table~\ref{tab:NAG}).
\end{remark}

\subsection{Geometry-Aware Attention Guidance}\label{sec:gag}

Building on the directional properties established in Section~\ref{sec:directional}, we reformulate attention extrapolation as
\begin{equation}
\Tc'(\mbx) = \Tca(\mbx) + \lambda\, r(\mbx), \label{eq:pladis_form}
\end{equation}
which recovers PLADIS~\cite{pladis} for $\lambda$ matching the original extrapolation scale. To analyze how $r(\mbx)$ acts on the sampling trajectory, we decompose it relative to the sparse retrieval direction $\Tca(\mbx)$, in the spirit of the projection framework in APG~\cite{apg}.

\begin{definition}[Geometric Decomposition of the Discrepancy]\label{def:decomp}
The parallel and orthogonal components of $r(\mbx)$ with respect to $\Tca(\mbx)$, i.e., $r(\mbx) = r^{\parallel}(\mbx) + r^{\perp}(\mbx)$, are
\begin{equation}
r^{\parallel}(\mbx) := \frac{\langle r(\mbx), \Tca(\mbx)\rangle}{\norm{\Tca(\mbx)}^2}\,\Tca(\mbx), \qquad r^{\perp}(\mbx) := r(\mbx) - r^{\parallel}(\mbx). \label{eq:gdecomp}
\end{equation}
\end{definition}

\begin{wrapfigure}[16]{r}{0.42\linewidth}
\vspace{-1.0em}
\centering
\includegraphics[width=0.95\linewidth]{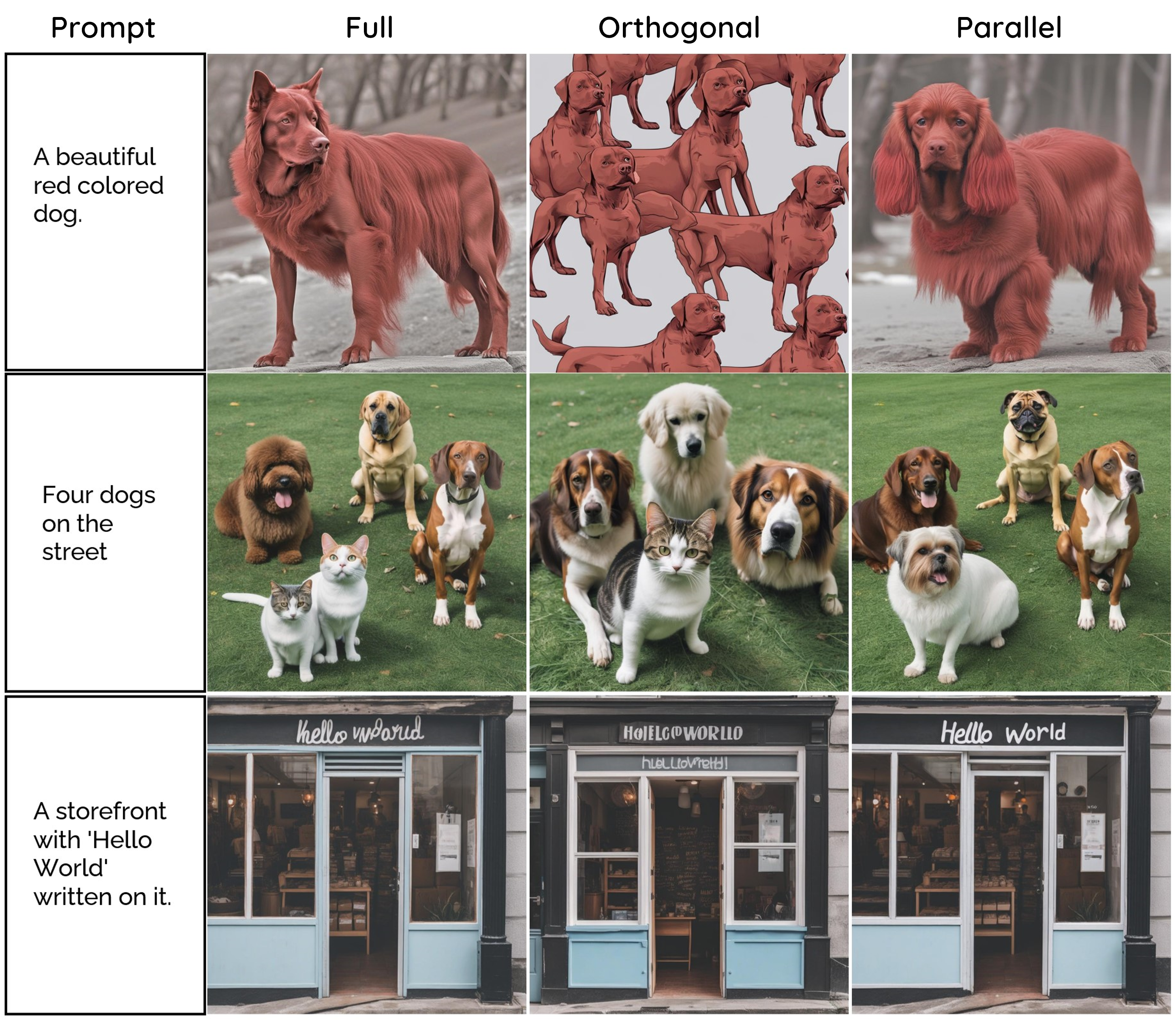}
\caption{\textbf{Geometric components.} Orthogonal-only fails to recover structure; full residual shows interference; parallel-only ($\zeta{=}0$) yields the best quality.}
\label{fig:fig_decompo}
\vspace{-0.8em}
\end{wrapfigure}

By Theorem~\ref{thm:directional}, $r(\mbx)$ is non-negatively aligned with the residual $\xib_\mu - \Tcd(\mbx)$ along $\mathrm{span}(\Vb)$. The parallel component $r^{\parallel}(\mbx)$ therefore captures the convergence-accelerating part of $r(\mbx)$, while the orthogonal component $r^{\perp}(\mbx)$ is not aligned with the retrieval direction. Note that APG~\cite{apg} contrasts two operators with distinct attractors; in our shared-attractor setting (Corollary~\ref{cor:dirconsist}(b)), off-direction components are correspondingly less informative as guidance.

\paragraph{Empirical validation.} We isolate each component during PLADIS-based sampling. As shown in Figure~\ref{fig:fig_decompo}, orthogonal-only extrapolation fails to recover semantic structure; the full residual ($r=r^{\parallel}+r^{\perp}$) is degraded by $r^{\perp}$ relative to parallel-only; and parallel-only ($r^{\parallel}$) yields the most faithful generations among the three. A finer $\zeta$-sweep is reported in Figure~\ref{fig:row}(c).

\paragraph{Refined update rule.} Motivated by the directional analysis above, we propose \textbf{Geometry-Aware Attention Guidance (GAG)}:
\begin{equation}
\Tcal(\mbx) = \Tca(\mbx) + \lambda \underbrace{\left[\min\!\left(1,\, \frac{\eta}{\norm{\tilde r(\mbx)}}\right) \cdot \tilde r(\mbx)\right]}_{\text{rescaled geometry-aware term}}, \label{eq:gag}
\end{equation}
where $\tilde r(\mbx) = r^{\parallel}(\mbx) + \zeta\, r^{\perp}(\mbx)$, $\zeta\in[0,1]$ controls orthogonal suppression, and the rescaling factor $\eta$ caps the guidance norm. Setting $\zeta=0$ uses only the convergence-aligned component, theoretically justified by Theorem~\ref{thm:directional}; the rescaling factor regulates magnitude in the high-$\lambda$ regime.

\subsection{Stability via Orthogonal Weak Contraction}\label{sec:stable}
We now study the asymptotic behavior of the orthogonal error under~\eqref{eq:gag}. Let $\Pi_{\perp}(\mbx) : \mathcal{H} \to \mathcal{H}$ be the orthogonal projector onto $(\mathrm{span}\{\Tca(\mbx)\})^{\perp}$.

\begin{assumption}[Orthogonal Weak Contraction]\label{ass:weak_orth}
For an attractor $\mbx^{*}$ with $r(\mbx^{*})\to\mathbf{0}$ (as guaranteed by Corollary~\ref{cor:dirconsist}(b) under the separation conditions of Theorem~\ref{thm:directional}), there exist $c\in[0,1)$ and a strictly increasing forcing function $\phi:\mathbb{R}_{\ge 0}\to\mathbb{R}_{\ge 0}$ with $\phi(0)=0$ such that, for $\Tcal$ with $\zeta=0$,
\begin{equation}
\norm{\Pi_{\perp}(\mbx)(\Tcal(\mbx) - \mbx^{*})} \;\le\; c\,\norm{\Pi_{\perp}(\mbx)(\mbx - \mbx^{*})} - \phi\!\bigl(\norm{\Pi_{\perp}(\mbx)(\mbx - \mbx^{*})}\bigr).
\end{equation}
This is a standard weak contraction condition~\cite{rhoades2001some, alber1997principle} and serves as a mathematical regularity hypothesis on the orthogonal subspace, not a semantic claim about retrieval targets.
\end{assumption}

\begin{restatable}[Asymptotic Convergence of the Orthogonal Error]{theorem}{ttortho}\label{thm:conv_orth}
Suppose Theorem~\ref{thm:directional} and Assumption~\ref{ass:weak_orth} hold, and let $\{\mbx_t\}_{t\in\mathbb{N}}$ be generated by $\mbx_{t+1} = \Tcal(\mbx_t)$ with $\zeta=0$. Then the orthogonal error magnitude $u_t := \norm{\Pi_{\perp}(\mbx_t)(\mbx_t-\mbx^{*})}$ satisfies $\lim_{t\to\infty} u_t = 0$.
\end{restatable}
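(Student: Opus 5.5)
The proof is a standard weak-contraction argument applied to the scalar sequence $u_t$. The one non-routine point is how the assumption transfers from one iterate to the next, which I expect to be the main obstacle. The plan is to first obtain a one-step recursion of the form
\begin{equation}
u_{t+1} \;\le\; c\,u_t - \phi(u_t), \qquad t\in\mathbb{N}, \label{eq:plan_rec}
\end{equation}
and then prove that any nonnegative sequence obeying \eqref{eq:plan_rec} with $c\in[0,1)$ and $\phi$ strictly increasing with $\phi(0)=0$ tends to $0$.

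\textbf{Step 1 (recursion).} Apply Assumption~\ref{ass:weak_orth} at $\mbx=\mbx_t$ with $\zeta=0$, so that $\mbx_{t+1}=\Tcal(\mbx_t)$. This gives
\[
\norm{\Pi_{\perp}(\mbx_t)(\mbx_{t+1}-\mbx^*)} \;\le\; c\,u_t-\phi(u_t).
\]
The left-hand side uses the projector at $\mbx_t$, whereas $u_{t+1}$ uses $\Pi_\perp(\mbx_{t+1})$.

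The structure of GAG helps reconcile the two. With $\zeta=0$ and the rescaling in \eqref{eq:gag}, we have $\tilde r=r^{\parallel}$, so
\[
\mbx_{t+1}=\bigl(1+\lambda s_t\,\kappa_t\bigr)\Tca(\mbx_t),
\]
where $\kappa_t=\langle r,\Tca\rangle/\norm{\Tca}^2$ and $s_t=\min(1,\eta/\norm{r^\parallel})$. Hence $\mbx_{t+1}\in\mathrm{span}\{\Tca(\mbx_t)\}$. The GAG step therefore introduces no new orthogonal content relative to the current retrieval direction, and the orthogonal error is produced entirely by the attractor's offset from that direction.

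I would then argue one of two things. Either the assumption is meant along the trajectory, i.e.\ it bounds $u_{t+1}$ directly; this is how I would state the reading explicitly. Or, if a more careful version is wanted, I would bound the projector change $\norm{\Pi_\perp(\mbx_{t+1})-\Pi_\perp(\mbx_t)}$ by the angle between $\Tca(\mbx_{t+1})$ and $\Tca(\mbx_t)$, which vanishes as $r\to\mathbf{0}$ by Corollary~\ref{cor:dirconsist}(b) and Theorem~\ref{thm:directional}. In that version the mismatch becomes a summable or vanishing perturbation that is absorbed into the slack $(1-c)u_t$. I expect this projector-dependence to be the only delicate point, and I would present the first reading as the main line.

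\textbf{Step 2 (monotonicity and boundedness).} From \eqref{eq:plan_rec} together with $\phi\ge 0$ we get $0\le u_{t+1}\le c\,u_t\le u_t$. So $(u_t)$ is nonincreasing and bounded below, and converges to some $L\ge 0$. In fact $c<1$ alone already gives $u_t\le c^t u_0\to 0$. I would still include the forcing-function argument, because it also covers the degenerate case $c=1$ typical of weak contractions and it mirrors \cite{rhoades2001some,alber1997principle}.

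\textbf{Step 3 (limit is zero).} Suppose $L>0$. Since $u_t\ge L$ and $\phi$ is increasing, $\phi(u_t)\ge\phi(L)>0$ for all $t$. Using $c\le 1$, \eqref{eq:plan_rec} gives $u_{t+1}\le u_t-\phi(L)$, and summing yields
\[
u_T\;\le\; u_0 - T\,\phi(L)\;\to\;-\infty,
\]
which contradicts $u_T\ge 0$. Hence $L=0$, i.e.\ $\lim_{t\to\infty}u_t=0$.
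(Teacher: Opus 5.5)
Your proposal is correct and follows essentially the same route as the paper. The paper also obtains $u_{t+1}\le c\,u_t-\phi(u_t)$ by applying Assumption~\ref{ass:weak_orth} along the iterates, silently identifying $\Pi_{\perp}(\mbx_t)$ with $\Pi_{\perp}(\mbx_{t+1})$ (the mismatch you correctly flag and resolve the same way), and concludes by monotone convergence with $(1-c)u_\infty+\phi(u_\infty)\le 0$, whereas you argue by summation and contradiction and note that $u_t\le c^t u_0$ already suffices since $c<1$, which is only a cosmetic difference.
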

\begin{proof}
See Appendix~\ref{sec:proof}.
\end{proof}

\begin{remark}[Stability and robustness]\label{rem:stability}
Theorem~\ref{thm:conv_orth} clarifies why GAG remains stable at high guidance scales $\lambda$: while standard extrapolation ($\zeta=1$) can amplify orthogonal residuals, $\zeta=0$ contracts off-manifold components according to the weak contraction property. Excluding $r^{\perp}$ thus serves as a sufficient condition for the contraction dynamics in Assumption~\ref{ass:weak_orth}, keeping the trajectory within the convergence basin of the retrieval manifold.
\end{remark}

\subsection{Connection to Anderson Acceleration}\label{sec:aa_connect}
Beyond its geometric motivation, our framework provides a basis for one interpretation of attention-space extrapolation in classical fixed-point acceleration. Anderson Acceleration (AA)~\cite{anderson} forms an extrapolation from the last $m+1$ iterates of an operator $\Fc$ as in~\eqref{eq:aa_general}; AA is well-studied as an acceleration scheme~\cite{anderson, shehu2018convergence}, and the $m{=}1$ case retains its convergence guarantees under standard conditions. With $m=1$, the update reduces to
\begin{equation}
\mbx_{k+1} = \Fc(\mbx_k) + \omega\bigl(\Fc(\mbx_k) - \Fc(\mbx_{k-1})\bigr). \label{eq:aa_m1}
\end{equation}

\noindent\textbf{Why $m{=}1$ in attention architectures.} A single attention layer is one Picard step of MHN dynamics (Theorem~\ref{thm:fp}) without history of previous iterates, so $m{\ge}2$ AA is unrealizable and the missing $\Fc(\mbx_{k-1})$ must be approximated by a \emph{proxy}. An explicit $m{=}2$ surrogate empirically does not improve over $m{=}1$ (Appendix~\ref{app:m2}).

\noindent\textbf{Operator contrasts as proxies.} For PLADIS~\eqref{eq:pladis_form}, by the faster convergence of sparse retrieval (Theorem~\ref{thm:directional}), the discrepancy $r(\mbx) = \Tca(\mbx) - \Tcd(\mbx)$ approximates $\Fc(\mbx_k) - \Fc(\mbx_{k-1})$ as a proxy. NAG~\cite{nag} admits a parallel reading: the contrast between attention from positive and negative prompts forms another such proxy for the missing previous output. Each method thus admits an interpretation as an \emph{implicit $m{=}1$} AA step in attention space, providing a basis for one common reading of attention-space extrapolation.

\section{Experiments}\label{sec:exp}

\noindent\textbf{Setup and metrics.} To demonstrate the broad applicability of our method, we employ SDXL~\cite{sdxl}, FLUX.1~\cite{flux2024}, FLUX.2 (Klein-4B and Klein-9B)~\cite{flux2024}, and Qwen-Image (2512)~\cite{qwenimage} as base models, comparing across CFG, CFG+PAG, and APG~\cite{apg} with default schedulers and step counts (50/28/4 for SDXL/FLUX.1-Dev/FLUX.1-Schnell). Step-distilled benchmarks use SDXL-DMD2~\cite{dmd2}, Hyper-SDXL~\cite{hyper}, and SDXL-Lightning~\cite{light} in 4-step settings, on a single NVIDIA H100 GPU (batch $=1$). Metrics include GenEval~\cite{geneval} for compositional alignment, FID on MS-COCO~\cite{coco}, CLIPScore (CS)~\cite{clipscore}, and the human-preference proxies ImageReward (IR)~\cite{imagereward}, PickScore (PS)~\cite{pick}, and HPSv2~\cite{hpsv2}, evaluated on $10$K MS-COCO captions; further details in Appendix~\ref{sec:detail}.

\begin{table}[t]
\centering
\caption{Quantitative comparison on GenEval and MS-COCO with SDXL. Left: high-NFE regimes; right: low-NFE step-distilled models.}
\label{tab_main}
\setlength{\tabcolsep}{3pt}
\small
\resizebox{0.95\linewidth}{!}{
\begin{tabular}[t]{lccccccc}
\toprule
Method & NFE & GE~$\uparrow$ & FID~$\downarrow$ & CS~$\uparrow$ & IR~$\uparrow$ & PS~$\uparrow$ & HPSv2~$\uparrow$ \\
\cmidrule(lr){1-1}\cmidrule(lr){2-2}\cmidrule(lr){3-3}\cmidrule(lr){4-8}
CFG & 50 & 0.547 & 20.22 & 26.46 & 0.606 & 22.31 & 0.266 \\
\midrule
+\,PLADIS & 50 & 0.597 & 15.81 & 26.99 & 0.801 & 22.31 & 0.268 \\
\rowcolor{green!10} +\,\textbf{GAG (Ours)} & 50 & \bf 0.605 & \bf 14.63 & \bf 27.25 & \bf 0.811 & \bf 22.39 & \bf 0.271 \\
\midrule
APG & 50 & 0.557 & 18.53 & 26.78 & 0.702 & \bf 22.36 & 0.269 \\
\midrule
+\,PLADIS & 50 & 0.608 & 14.51 & 27.36 & 0.808 & 21.92 & 0.261 \\
\rowcolor{green!10} +\,\textbf{GAG (Ours)} & 50 & \bf 0.613 & \bf 13.22 & \bf 27.55 & \bf 0.851 & 22.24 & \bf 0.274 \\
\midrule
CFG+PAG & 100 & 0.528 & 19.59 & 25.98 & 0.640 & 22.30 & 0.277 \\
\midrule
+\,PLADIS & 100 & 0.598 & 15.43 & 26.05 & 0.751 & 22.33 & 0.279 \\
\rowcolor{green!10} +\,\textbf{GAG (Ours)} & 100 & \bf 0.619 & \bf 14.32 & \bf 26.11 & \bf 0.822 & \bf 22.36 & \bf 0.281 \\
\bottomrule
\end{tabular}
\hspace{0.05cm}
\begin{tabular}[t]{lccccccc}
\toprule
Method & NFE & GE~$\uparrow$ & FID~$\downarrow$ & CS~$\uparrow$ & IR~$\uparrow$ & PS~$\uparrow$ & HPSv2~$\uparrow$ \\
\cmidrule(lr){1-1}\cmidrule(lr){2-2}\cmidrule(lr){3-3}\cmidrule(lr){4-8}
SDXL-DMD2 & 4 & 0.581 & 19.83 & 26.69 & 0.888 & 22.64 & 0.297 \\
\midrule
+\,PLADIS & 4 & 0.585 & 18.78 & 27.06 & \bf 0.965 & 22.65 & 0.297 \\
\rowcolor{green!10} +\,\textbf{GAG (Ours)} & 4 & \bf 0.598 & \bf 17.49 & \bf 27.42 & 0.952 & \bf 22.66 & \bf 0.298 \\
\midrule
Hyper-SDXL & 4 & 0.556 & 21.23 & 26.09 & 0.959 & 22.87 & 0.313 \\
\midrule
+\,PLADIS & 4 & 0.589 & 19.83 & 27.11 & 1.060 & 22.91 & 0.315 \\
\rowcolor{green!10} +\,\textbf{GAG (Ours)} & 4 & \bf 0.594 & \bf 19.12 & \bf 27.55 & \bf 1.071 & \bf 22.93 & 0.317 \\
\midrule
SDXL-Light & 4 & 0.535 & 24.42 & 26.05 & 0.725 & 22.62 & 0.293 \\
\midrule
+\,PLADIS & 4 & 0.570 & 23.23 & 26.70 & 0.835 & 22.62 & 0.290 \\
\rowcolor{green!10} +\,\textbf{GAG (Ours)} & 4 & \bf 0.579 & \bf 22.85 & \bf 26.79 & \bf 0.843 & \bf 22.66 & \bf 0.294 \\
\bottomrule
\end{tabular}
}
\vspace{-1.5em}
\end{table}

\begin{table}[t]
\centering
\caption{(Left) Quantitative comparison on FLUX.1 (Schnell and Dev). (Right) Computational overhead (per-image wall-clock and peak memory) on SDXL+CFG and FLUX.1-Schnell.}
\label{tab_main_flux}
\centering
\small
\setlength{\tabcolsep}{3pt}
\resizebox{0.95\linewidth}{!}{
\begin{tabular}[t]{lcccccc}
\toprule
Method & NFE & GE~$\uparrow$ & CS~$\uparrow$ & IR~$\uparrow$ & PS~$\uparrow$ & HPSv2~$\uparrow$ \\
\cmidrule(lr){1-1}\cmidrule(lr){2-2}\cmidrule(lr){3-7}
FLUX.1-Schnell & 4 & 0.671 & 26.28 & 1.059 & 22.62 & 0.295 \\
\midrule
+\,PLADIS & 4 & 0.713 & 26.45 & 1.066 & 22.30 & 0.295 \\
\rowcolor{green!10} +\,\textbf{GAG (Ours)} & 4 & \bf 0.739 & \bf 26.56 & \bf 1.114 & \bf 22.79 & \bf 0.301 \\
\midrule
FLUX.1-Dev & 28 & 0.666 & 25.71 & 1.093 & 23.00 & 30.42 \\
\midrule
+\,PLADIS & 28 & 0.691 & \bf 25.80 & 1.115 & 23.03 & 30.54 \\
\rowcolor{green!10} +\,\textbf{GAG (Ours)} & 28 & \bf 0.737 & 25.76 & \bf 1.127 & \bf 23.14 & \bf 30.94 \\
\bottomrule
\end{tabular}
\hspace{0.05cm}
\begin{tabular}[t]{lccc}
\toprule
Method & NFE & Time/img (s)~$\downarrow$ & Peak Mem.\ (GB)~$\downarrow$ \\
\midrule
CFG (SDXL) & 50 & 5.042 & 15.21 \\
+\,PLADIS & 50 & 5.567 & 15.32 \\
\rowcolor{green!10} +\,\textbf{GAG} & 50 & 5.783 & 15.34 \\
\midrule
FLUX.1-Schnell & 4 & 1.342 & 33.65 \\
+\,PLADIS & 4 & 1.567 & 34.84 \\
\rowcolor{green!10} +\,\textbf{GAG} & 4 & 1.823 & 34.94 \\
\bottomrule
\end{tabular}
}
\vspace{-1em}
\end{table}

\noindent\textbf{Guidance sampling.} Table~\ref{tab_main} (left) presents quantitative results using a default budget of 50 steps. For standard CFG, our method yields substantial gains in both GenEval and human preference metrics. Notably, all configurations---including advanced techniques like APG and PAG (jointly used with CFG)---consistently surpass a $0.6$ GenEval score. Crucially, while PAG typically enhances aesthetic quality at the expense of text alignment, our approach improves both dimensions simultaneously with minimal computational overhead, and FID is reduced relative to PLADIS in every row. Qualitative comparisons are provided in Appendix~\ref{sec:visual}.

\noindent\textbf{Step-distilled models.} Table~\ref{tab_main} (right) demonstrates the broad applicability of our method across various distilled models, including DMD2, Hyper-SDXL, and SDXL-Lightning. Our approach yields robust enhancements across these models, consistently outperforming both baseline and PLADIS. 

\begin{wraptable}{r}{0.3\linewidth}
	\vspace{-1.5em}
	\centering
	\caption{FLUX.2 and Qwen-Image on GenEval.}
	\label{tab:flux2_qwen}
	\small
	\setlength{\tabcolsep}{4pt}
	\resizebox{0.7\linewidth}{!}{
	\begin{tabular}{lc}
		\toprule
		Method & GE~$\uparrow$ \\
		\midrule
		FLUX.2 (4B) & 0.824 \\
		\,+\,PLADIS & 0.826 \\
		\rowcolor{green!10}\,+\,\textbf{GAG} & \textbf{0.843} \\
		\midrule
		FLUX.2 (9B) & 0.849 \\
		\,+\,PLADIS & 0.865 \\
		\rowcolor{green!10}\,+\,\textbf{GAG} & \textbf{0.878} \\
		\midrule
		Qwen-Image & 0.824 \\
		\,+\,PLADIS & 0.828 \\
		\rowcolor{green!10}\,+\,\textbf{GAG} & \textbf{0.842} \\
		\bottomrule
	\end{tabular}
}
	\vspace{-1.0em}
\end{wraptable}
These gains, achieved with minimal overhead, confirm that our framework is highly effective across diverse few-step models, providing a practical solution for high-quality generation in resource-constrained settings. Qualitative results are provided in Appendix~\ref{sec:visual}.

\noindent\textbf{Generalizability across backbones.} Beyond the UNet-based SDXL, we evaluate our framework on the more complex MMDiT architecture in Table~\ref{tab_main_flux} (left). Despite the inherent difficulty of applying guidance to pre-distilled models like FLUX.1-Schnell and FLUX.1-Dev, our method yields consistent improvements in human-preference metrics while reaching GenEval scores of $0.739$ and $0.737$, respectively. The same trend extends to FLUX.2 (4B/9B) and Qwen-Image (Table~\ref{tab:flux2_qwen}), confirming generalization across diverse guidance mechanisms, distillation, and foundational backbones;

\begin{figure}[t]
	\centering
	\begin{subfigure}[b]{0.30\linewidth}
		\centering
		\includegraphics[width=\linewidth]{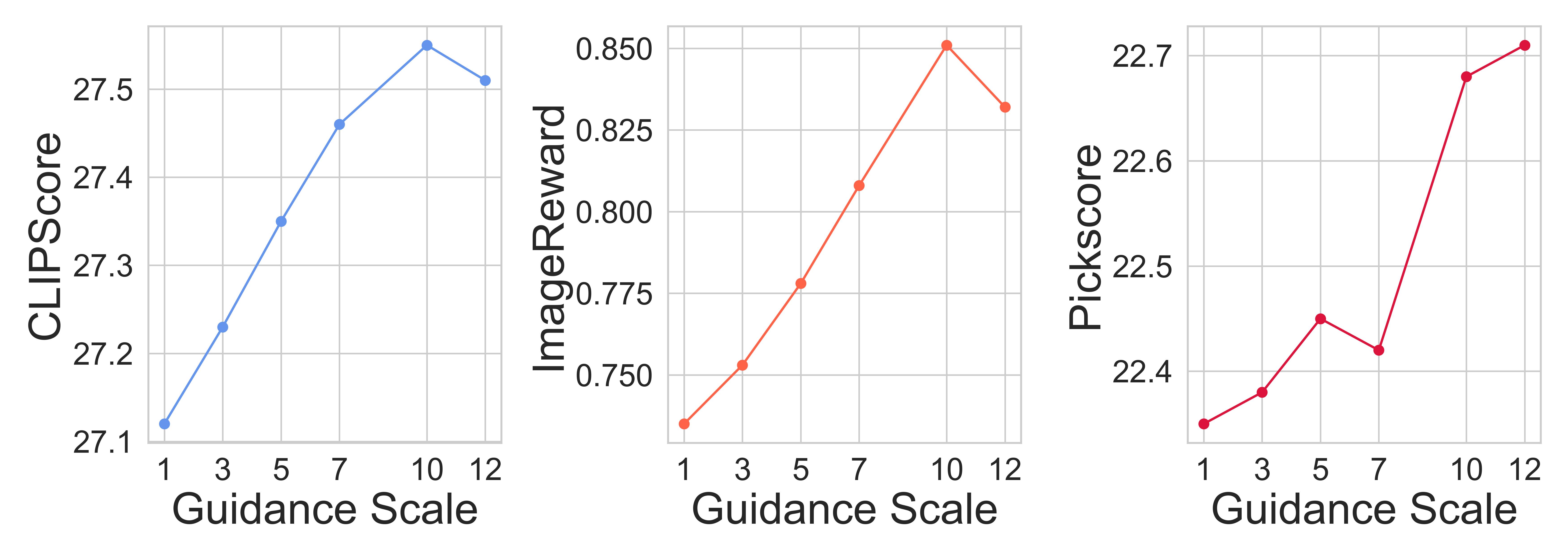}
		\caption{Guidance scale $\lambda \in [1, 12]$.}
		\label{fig:fig_scale}
	\end{subfigure}\hfill
	\begin{subfigure}[b]{0.40\linewidth}
		\centering
		\vspace{0.3em}
		\setlength{\tabcolsep}{2pt}\scriptsize
		\begin{tabular}{lcccc}
			\toprule
			Method & NFE & CS~$\uparrow$ & IR~$\uparrow$ & PS~$\uparrow$ \\
			\midrule
			Hyper-SDXL & 4 & 26.09 & 0.959 & 22.87 \\
			+\,PLADIS & 4 & 27.11 & 1.060 & 22.91 \\
			\rowcolor{green!10}\,+\,\textbf{GAG} & 4 & \textbf{27.55} & \textbf{1.072} & \textbf{22.93} \\
			\midrule
			+\,NAG & ${>}4$ & 27.10 & 1.046 & 22.86 \\
			\,+\,GAG (inc.) & ${>}4$ & 26.64 & 0.971 & 22.88 \\
			\midrule
			+\,PLADIS & 8 & 27.21 & 1.181 & 23.10 \\
			\rowcolor{green!10}\,+\,\textbf{GAG} & 8 & \textbf{27.78} & \textbf{1.213} & \textbf{23.13} \\
			\bottomrule
		\end{tabular}
		\caption{NAG comparison on Hyper-SDXL.}
		\label{tab:NAG}
	\end{subfigure}\hfill
	\begin{subfigure}[b]{0.27\linewidth}
		\centering
		\vspace{0.3em}
		\setlength{\tabcolsep}{2pt}\scriptsize
		\begin{tabular}{lccc}
			\toprule
			$\zeta$ & CS~$\uparrow$ & IR~$\uparrow$ & PS~$\uparrow$ \\
			\midrule
			$0.00$ (Par.) & \textbf{27.25} & \textbf{0.811} & \textbf{22.39} \\
			$0.25$ & 27.13 & 0.789 & 22.31 \\
			$0.50$ & 27.01 & 0.763 & 22.15 \\
			$0.75$ & 26.82 & 0.755 & 22.11 \\
			$1.00$ (Full) & 26.54 & 0.732 & 22.10 \\
			Ortho.\ & 25.25 & 0.415 & 21.79 \\
			\bottomrule
		\end{tabular}
		\caption{$\zeta$ ablation (SDXL).}
		\label{tab:zeta}
	\end{subfigure}
	\caption{(a) Robustness across guidance scale $\lambda$. (b) GAG composed with NAG degrades, as predicted by Remark~\ref{rem:nag}. (c) $\zeta$ sweep: parallel-only ($\zeta{=}0$) is best.}
	\label{fig:row}
	\vspace{-1em}
\end{figure}

\section{Analysis and Ablation Study}\label{sec:analysis}

\noindent\textbf{Decomposition analysis ($\zeta$ ablation).} We sweep $\zeta\in\{0, 0.25, 0.5, 0.75, 1\}$ on $10$K MS-COCO samples (Figure~\ref{fig:row}(c)). Performance improves monotonically as $\zeta\to 0$, with parallel-only ($\zeta{=}0$) the best and orthogonal-only the weakest. This matches Theorem~\ref{thm:directional}: the parallel component along $\Tca(\mbx)$ carries the alignment between $r(\mbx)$ and $\xib_\mu - \Tcd(\mbx)$, while the orthogonal component acts as off-manifold noise, motivating $\zeta=0$ as our default (Theorem~\ref{thm:conv_orth}).

\noindent\textbf{Computational overhead.} Table~\ref{tab_main_flux} (right) reports per-image wall-clock and peak memory on H100 (batch $1$). On top of PLADIS, GAG adds only a few hundred milliseconds and a marginal memory increase, since the geometric projection reuses the existing sparse attention output and requires no extra backbone pass. In contrast, CFG is inapplicable to guidance-distilled FLUX.1-Schnell, so GAG remains the only viable single-pass guidance signal in this regime.

\begin{wraptable}{r}{0.4\linewidth}
\vspace{-1.0em}
\centering
\caption{Three-way user study.}
\label{tab:user_study}
\small
\setlength{\tabcolsep}{4pt}
\resizebox{0.9\linewidth}{!}{
\vspace{-1em}
\begin{tabular}{lcc}
\toprule
Method & Text Align.\ & Visual Quality \\
\midrule
Baseline & $10.7\%$ $[7.9{-}13.8]$ & $16.7\%$ $[13.1{-}20.5]$ \\
PLADIS & $12.9\%$ $[9.8{-}16.2]$ & $31.4\%$ $[27.1{-}36.0]$ \\
\rowcolor{green!10} GAG & $76.4\%$ $[72.1{-}80.5]$ & $51.9\%$ $[47.1{-}56.7]$ \\
\bottomrule
\end{tabular}
}
\vspace{-0.8em}
\end{wraptable}

\noindent\textbf{User study.} We ran a three-way blind preference study with $31$ participants on $50$ random prompts (Table~\ref{tab:user_study}), evaluating text alignment and visual quality. Per-method preference rates with $95\%$ bootstrap confidence intervals (CIs) are non-overlapping between GAG and either baseline on both metrics, indicating a statistically significant gap.

\noindent\textbf{Effect of rescaling.} The geometry-aware decomposition fixes the update \emph{direction} via Theorem~\ref{thm:directional}(ii); the rescaling factor in~\eqref{eq:gag} additionally bounds its \emph{magnitude} by clamping $\norm{\tilde r(\mbx)}$ at $\eta$. This pairing matters at high guidance scale: at $\lambda{=}10$, the parallel component can otherwise grow outside the typical norm range of $\Tca(\mbx)$, while CFG is well known to over-saturate in the same regime. With rescaling, GAG retains a stable operating point. See in detail at Table~\ref{tab:ablation_resc} (Appendix~\ref{app:rescaling}).

\noindent\textbf{Scale $\lambda$ and NAG comparison.} Figure~\ref{fig:row}(a,b) reports two complementary diagnostics. (a) Sweeping $\lambda \in [1, 12]$ on $5$K samples, GAG maintains a consistent lead and peaks near $\lambda{=}10$ (default). (b) NAG uses positive/negative-prompt contrast and therefore needs dual-pass inference per step, roughly doubling the per-step compute of single-pass methods. Even at this larger budget, NAG remains slightly below PLADIS, while 4-step GAG is best of the three and scales further with more steps (8-step row). Composing GAG on top of NAG degrades performance, the empirical signature predicted by our theory: Theorem~\ref{thm:directional} requires $\Tca$ and $\Tcd$ to share $(\Qb, \Kb, \Vb)$, while NAG's two operators receive distinct conditioning $\mbc_{\mathrm{pos}}\ne\mbc_{\mathrm{neg}}$ and thus distinct $\mathbf{V}$ (Remark~\ref{rem:nag}).

\section{Discussion and Limitations}\label{sec:related}

\noindent\textbf{Beyond T2I: DDIM inversion-reconstruction.}

\begin{wrapfigure}[16]{r}{0.42\linewidth}
\vspace{-1.0em}
\centering
\includegraphics[width=0.9\linewidth]{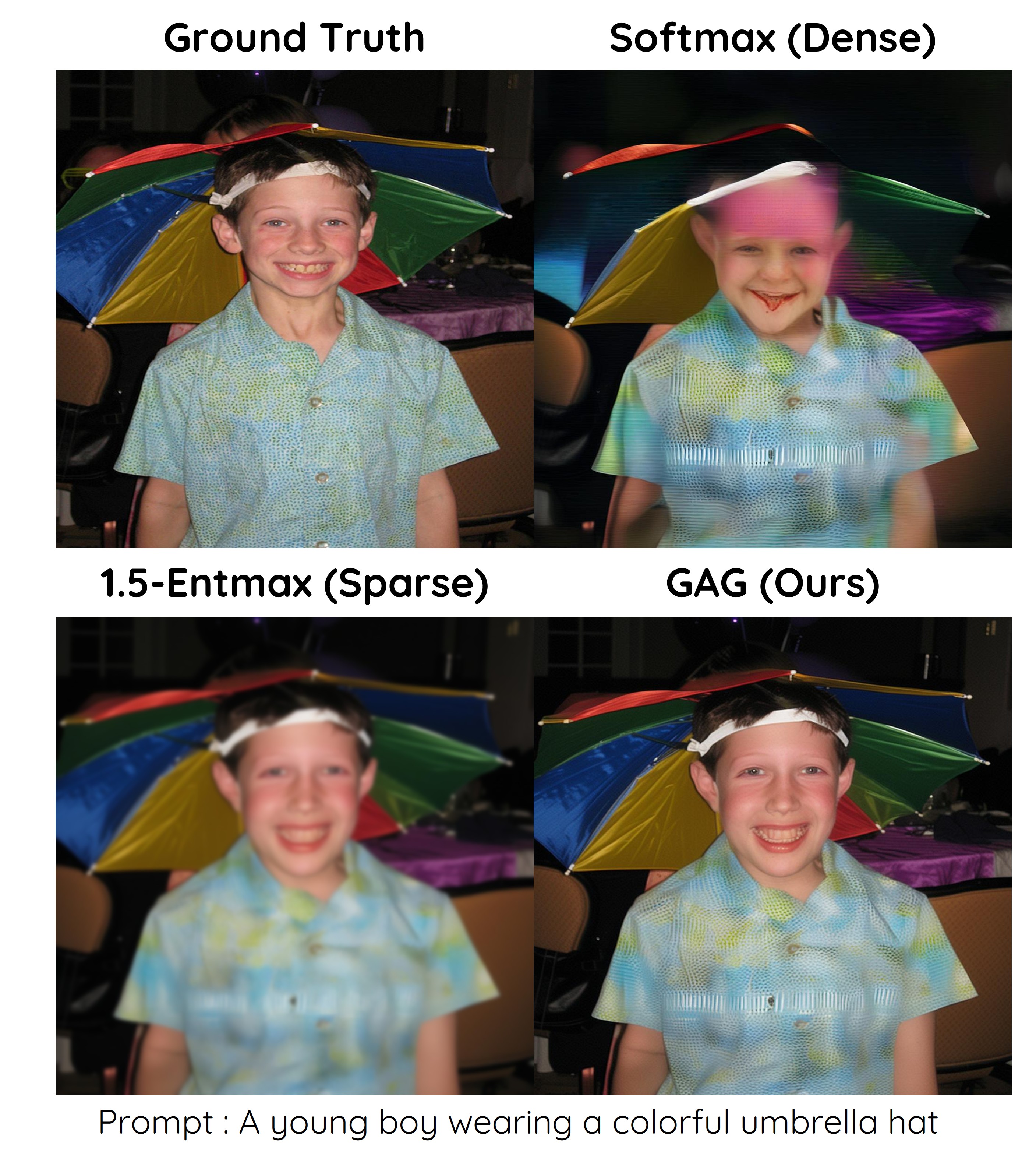}
\vspace{-0.5em}
\caption{DDIM inversion-reconstruction.}
\label{fig:ddim_main}
\vspace{-1.0em}
\end{wrapfigure}
We further test GAG on DDIM inversion-reconstruction, where the input image is a known attractor $\mbx^{*}$, directly testing Theorem~\ref{thm:directional}(i): $\norm{\Tca(\mbx)-\mbx^{*}}\le\norm{\Tcd(\mbx)-\mbx^{*}}$. As shown in Figure~\ref{fig:ddim_main}, GAG refines sparse reconstruction further toward $\mbx^{*}$, suggesting applicability to inversion-based editing and restoration (quantitative results and details in Appendix~\ref{app:ddim}).

\noindent\textbf{Hopfield--diffusion connections.} Prior work links Hopfield and diffusion at the model level~\cite{ambrogioni2023inthopfieldnet, hoover2023memorizationhopfielddiffusion, zheng2025hopfielddiffusion} and layer level~\cite{ramsauer2020hopfieldnetworks}. Building on the layer-level reading, we identify the directional structure between two one-step retrieval operators sharing the same memories (Theorem~\ref{thm:directional}), absent from prior literature; see Appendix~\ref{app:hopfield_related}.

\noindent\textbf{Limitations and broader impact.} Our directional analysis is restricted to cross-attention with shared conditioning, with directional guarantees holding only asymptotically (well-separated patterns, sufficiently large $\beta$); the added inference cost over PLADIS is small but non-zero. As a training-free method without new model releases, GAG inherits the broader implications of its host text-to-image diffusion models, including standard generative-model risks, with mitigation deferred to the underlying model providers.

\section{Conclusion}\label{sec:concl}
We present GAG, a Geometry-Aware Attention Guidance derived from the first directional analysis of attention-space extrapolation. Within the Modern Hopfield framework, the sparse--dense discrepancy under shared conditioning is a directionally consistent acceleration signal; GAG decomposes it into parallel and orthogonal components and amplifies the convergence-aligned part, with stability from a weak contraction. Empirically, GAG generalizes across UNet/MMDiT backbones (FLUX.2, Qwen-Image) and sampling regimes with minimal overhead.


\bibliographystyle{plainnat}
\bibliography{refer}

\newpage
\appendix

\begin{center}
\Large \textbf{Geometry-Aware Attention Guidance for Diffusion Models via Modern Hopfield Dynamics}\\[0.3em]
{\large\itshape Supplementary Material}
\end{center}
\vspace{0.5em}

\section{Overview}\label{sec:overview}

This supplementary document collects additional theoretical details, implementation specifics, and extended experimental results, organized as follows:
\begin{itemize}
\item \textbf{Theoretical Foundations} (Section~\ref{sec:background}): A self-contained background on (Sparse) Modern Hopfield Networks, the connection between sparse retrieval dynamics and (cross-)attention, the retrieval-error bounds used by our directional analysis, and the noise-robustness properties of $\alpha$-Entmax.
\item \textbf{Formal Proofs} (Sections~\ref{app:directional_proof}, \ref{app:cor_proof}, \ref{sec:proof}): Proofs of Theorem~\ref{thm:directional} (Directional Surrogate Property), Corollary~\ref{cor:dirconsist} (Directional Consistency), and Theorem~\ref{thm:conv_orth} (Asymptotic Convergence of Orthogonal Error).
\item \textbf{Anderson Acceleration $m{=}2$ Surrogate} (Section~\ref{app:m2}): Construction of an explicit $m{=}2$ AA surrogate along the sparsity axis and its empirical comparison with the implicit $m{=}1$ form realized by the sparse--dense contrast.
\item \textbf{Implementation Details} (Section~\ref{sec:detail}): Sampling protocol, evaluation metrics, baseline configurations, and GAG hyperparameters.
\item \textbf{Rescaling Ablation} (Section~\ref{app:rescaling}): Effect of the rescaling factor at high guidance scale.
\item \textbf{DDIM Inversion-Reconstruction} (Section~\ref{app:ddim}): Experimental setup ($200$ MS-COCO prompts on SDXL, $50$-step inversion without CFG followed by $50$-step reconstruction with each operator) and qualitative reconstructions validating the convergence-rate ordering of Theorem~\ref{thm:directional}(i).
\item \textbf{Hopfield--Diffusion Connections (Extended)} (Section~\ref{app:hopfield_related}): Expanded discussion of model-level vs.\ layer-level Hopfield approaches and why we focus on cross-attention rather than self-attention.
\item \textbf{Extended Qualitative Results} (Section~\ref{sec:visual}): Visual comparisons with existing guidance methods (CFG, APG, PAG), step-distilled models (Hyper-SDXL, DMD2), MMDiT backbones (FLUX.1-Schnell, FLUX.1-Dev), a side-by-side comparison with NAG, and the latest MMDiT/DiT backbones (FLUX.2 Klein-4B/9B, Qwen-Image).
\end{itemize}

\section{Theoretical Background}\label{sec:background}
This section provides a self-contained background on (Sparse) Modern Hopfield Networks, the connection to (cross-)attention, and the retrieval-error bounds used in the proofs of Theorem~\ref{thm:directional}, Corollary~\ref{cor:dirconsist}, and Theorem~\ref{thm:conv_orth}.

\paragraph{Notations.} For $a \in \mathbb{R}$, $a_+ := \max\{0, a\}$. For $\mathbf{z}, \mathbf{z}' \in \mathbb{R}^d$, $\langle \mathbf{z}, \mathbf{z}'\rangle = \mathbf{z}^\top \mathbf{z}'$ denotes the inner product. For $\mathbf{z} = (z_1,\dots,z_d) \in \mathbb{R}^d$, the sorted coordinates are written $z_{(1)} \ge z_{(2)} \ge \dots \ge z_{(d)}$, where $z_{(\nu)}$ is the $\nu$-th largest entry. We write $\Delta^{M-1} := \{\mathbf{p} \in \mathbb{R}^M : p_i \ge 0,\, \sum_i p_i = 1\}$ for the $(M-1)$-dimensional simplex.

\paragraph{Modern Hopfield Networks.} The classical Hopfield model~\cite{hopfield1982neural} stores binary patterns as local minima of an associated energy function and retrieves the closest minimum given a query input; numerous extensions have been proposed to improve stability and capacity~\cite{demircigil2017model, krotov2016dense, barra2018new}. \citet{hopfield} introduced the \emph{Modern Hopfield Network} (MHN), a continuous-state, differentiable variant equipped with an energy function $E$ and retrieval dynamics $\mathcal{T}$ that achieve pattern retrieval in a single update:
\begin{align}
E_{\texttt{Dense}}(\mathbf{x}) &:= -\texttt{lse}(\beta, \mathbf{\Xi}^\top \mathbf{x}) + \tfrac{1}{2}\langle \mathbf{x}, \mathbf{x}\rangle, \\
\mathcal{T}_{\texttt{Dense}}(\mathbf{x}) &:= \mathbf{\Xi}\,\texttt{Softmax}(\beta\,\mathbf{\Xi}^\top \mathbf{x}), \label{eq:app_dense}
\end{align}
where $\mathbf{x} \in \mathbb{R}^d$ is the query, $\mathbf{\Xi} = [\xib_1, \dots, \xib_M] \in \mathbb{R}^{d \times M}$ collects the stored patterns $\xib_i \in \mathbb{R}^d$, and $\texttt{lse}(\beta, \mathbf{z}) := \tfrac{1}{\beta}\log\sum_i \exp(\beta z_i)$ is the log-sum-exponential function. Convergence and state properties of the energy and retrieval dynamics are established in~\cite{hopfield}.

\paragraph{Connection with Transformer attention.} A key observation of \citet{hopfield} is that the MHN update rule coincides with one Picard step of self-attention. Following~\cite{hopfield}, $\mathcal{T}_{\texttt{Dense}}$ is first extended to multiple queries $\mathbf{X} := \{\mathbf{x}_i\}_{i\in[N]}$. With raw query $\mathbf{R}$ and memory matrix $\mathbf{Y}$, define the query/key matrices via $\mathbf{X}^\top = \mathbf{R}\mathbf{W}_Q := \mathbf{Q}$ and $\mathbf{\Xi}^\top = \mathbf{Y}\mathbf{W}_K := \mathbf{K}$. Taking the transpose and projecting $\mathbf{K}$ to $\mathbf{V}$ via $\mathbf{W}_V$ in~\eqref{eq:app_dense}, we obtain
\begin{equation}
\mathcal{T}_{\texttt{Dense}}(\mathbf{X}) = \texttt{Softmax}(\beta\,\mathbf{Q}\mathbf{K}^\top)\,\mathbf{K}\mathbf{W}_V = \texttt{Softmax}(\beta\,\mathbf{Q}\mathbf{K}^\top)\,\mathbf{V}, \label{eq:app_at1}
\end{equation}
which corresponds to standard scaled dot-product attention with $\beta = 1/\sqrt{d}$. Adopting the notation of~\cite{hopfield},
\begin{equation}
\mathcal{T}_{\texttt{Dense}}(\mathbf{X}) = \texttt{Softmax}(\mathbf{Q}\mathbf{K}^\top/\sqrt{d})\,\mathbf{V} := \texttt{At}(\mathbf{Q}, \mathbf{K}, \mathbf{V}). \label{eq:app_at2}
\end{equation}
This interpretation extends naturally to cross-attention with inputs $\mathbf{X}$ and $\mathbf{Y}$:
\[
\mathcal{T}_{\texttt{Dense}}(\mathbf{X}, \mathbf{Y}) = \texttt{Softmax}\bigl(\mathbf{X} \mathbf{W}_Q \mathbf{W}_K^\top \mathbf{Y}^\top / \sqrt{d}\bigr) \mathbf{Y} \mathbf{W}_V = \texttt{At}(\mathbf{W}_Q\mathbf{X},\, \mathbf{W}_K\mathbf{Y},\, \mathbf{W}_V\mathbf{Y}).
\]
Our analysis focuses on this cross-attention setting, in which text tokens act as stored memory patterns and image tokens as queries. Since different layers may store distinct patterns, MHNs naturally facilitate hierarchical storage and retrieval; whereas a basic Hopfield network outputs a single pattern, the attention mechanism yields multiple patterns simultaneously, so attention can be viewed as a stack of multi-valued Hopfield retrievals.

\paragraph{Sparse Hopfield Network and $\alpha$-Entmax.} A sparse extension of MHN is introduced in~\cite{sparsehop, stanhop}, replacing the softmax mapping with $\alpha\texttt{-Entmax}$:
\begin{align}
E_{\alpha}(\mathbf{x}) &:= -\mathbf{\Psi}^\star_{\alpha}(\beta, \mathbf{\Xi}^\top \mathbf{x}) + \tfrac{1}{2}\langle \mathbf{x}, \mathbf{x}\rangle, \\
\mathcal{T}_{\alpha}(\mathbf{x}) &:= \mathbf{\Xi}\,\alpha\texttt{-Entmax}(\beta\,\mathbf{\Xi}^\top \mathbf{x}), \label{eq:app_sparse_dyn}
\end{align}
where $\mathbf{\Psi}^\star_\alpha$ is the convex conjugate of the Tsallis entropy~\cite{tsallis} $\Psi_\alpha$, defined by
\begin{equation}
\Psi_{\alpha}(\mathbf{p}) :=
\begin{cases}
\tfrac{1}{\alpha(\alpha-1)}\sum_{i=1}^{M} (p_i - p_i^{\alpha}), & \alpha \neq 1,\\[2pt]
-\sum_{i=1}^{M} (p_i - \log p_i), & \alpha = 1,
\end{cases}
\end{equation}
and the probability mapping $\alpha\texttt{-Entmax}(\mathbf{z}) := \argmax_{\mathbf{p}\in\Delta^{M-1}}\bigl[\langle\mathbf{p}, \mathbf{z}\rangle - \Psi_\alpha(\mathbf{p})\bigr]$. The mapping admits the closed form~\cite{sparsemax}
\begin{equation}
\alpha\texttt{-Entmax}(\mathbf{z}) = \bigl[(\alpha - 1)\mathbf{z} - \tau(\mathbf{z})\,\mathbf{1}\bigr]_+^{1/(\alpha-1)}, \label{eq:app_entmax}
\end{equation}
for a unique threshold function $\tau:\mathbb{R}^M\to\mathbb{R}$, so entries below $\tau(\mathbf{z})/(\alpha-1)$ map to zero, yielding sparsity. Throughout, $\kappa(\mathbf{z})$ denotes the number of nonzero entries. The parameter $\alpha$ controls sparsity: $\alpha=1$ recovers softmax (and hence $\mathcal{T}_1 \equiv \mathcal{T}_{\texttt{Dense}}$~\cite{varsoftmax}); $\alpha = 2$ corresponds to $\texttt{sparsemax}$~\cite{sparsemax}, computable in closed form via sorting~\cite{held1974validation, michelot1986finite}; and $\alpha = 1.5$ admits an exact, simple closed-form solution~\cite{entmaxx}. For $1 < \alpha < 2$ in general, slower iterative schemes were used in earlier work~\cite{liu2009efficient}.

\paragraph{Extension to sparse attention.} Analogously to~\eqref{eq:app_at1}--\eqref{eq:app_at2}, the sparse retrieval dynamics extend directly to attention. For multiple queries,
\begin{equation}
\mathcal{T}_\alpha(\mathbf{X}) = \alpha\texttt{-Entmax}\bigl(\mathbf{Q}\mathbf{K}^\top/\sqrt{d}\bigr)\,\mathbf{V} := \texttt{At}_\alpha(\mathbf{Q}, \mathbf{K}, \mathbf{V}),
\end{equation}
and for cross-attention,
\[
\mathcal{T}_\alpha(\mathbf{X}, \mathbf{Y}) = \alpha\texttt{-Entmax}\bigl(\mathbf{X}\mathbf{W}_Q\mathbf{W}_K^\top\mathbf{Y}^\top/\sqrt{d}\bigr)\,\mathbf{Y}\mathbf{W}_V = \texttt{At}_\alpha(\mathbf{W}_Q\mathbf{X},\, \mathbf{W}_K\mathbf{Y},\, \mathbf{W}_V\mathbf{Y}).
\]
This extension lets the network selectively retrieve relevant patterns by zeroing out unrelated keys, while inheriting the theoretical properties of sparse associative memories.

\paragraph{Pattern stored / retrieved.} For the retrieval-error statements that follow, we recall the standard pattern-stored definition~\cite{sparsehop, stanhop}.
\begin{definition}[Pattern stored and retrieved]
Suppose every pattern $\xib_\mu$ lies in a ball $B_\mu$. We say $\xib_\mu$ is \textbf{stored} if there exists a unique fixed point $\mathbf{x}^\star_\mu \in B_\mu$ to which all $\mathbf{x} \in B_\mu$ converge, with the $\{B_\mu\}$ disjoint. We say $\xib_\mu$ is \textbf{retrieved} with error $\epsilon$ if $\norm{\mathcal{T}(\mathbf{x}) - \xib_\mu} \le \epsilon$ for all $\mathbf{x} \in B_\mu$.
\end{definition}

Setting $m := \max_\nu \norm{\xib_\nu}$, the next theorem collects the retrieval-error bounds from the Hopfield literature that we use throughout the proofs.

\begin{theorem}[Retrieval Error~\cite{hopfield, sparsehop, stanhop, pladis}]\label{thm:retrieval_err}
Let $\mathcal{T}_\alpha$ be the Hopfield retrieval with $\alpha\texttt{-Entmax}$. Then:
\begin{align}
&\alpha = 1: && \norm{\mathcal{T}_1(\mathbf{x}) - \xib_\mu} \le 2m(M-1)\exp\!\Bigl\{-\beta\bigl(\langle\xib_\mu, \mathbf{x}\rangle - \max_{\nu\ne\mu}\langle\xib_\nu, \mathbf{x}\rangle\bigr)\Bigr\}, \label{eq:app_ineq_hop}\\
&1 < \alpha \le 2: && \norm{\mathcal{T}_\alpha(\mathbf{x}) - \xib_\mu} \le m + m\,\kappa\!\Bigl[(\alpha-1)\beta\bigl(\max_\nu\langle\xib_\nu, \mathbf{x}\rangle - [\mathbf{\Xi}^\top\mathbf{x}]_{(\kappa+1)}\bigr)\Bigr]^{\tfrac{1}{\alpha-1}}, \label{eq:app_ineq_new1}\\
&\alpha = 2: && \norm{\mathcal{T}_2(\mathbf{x}) - \xib_\mu} \le m + m\beta\!\Bigl[\kappa\bigl(\max_\nu\langle\xib_\nu, \mathbf{x}\rangle - [\mathbf{\Xi}^\top\mathbf{x}]_{(\kappa)}\bigr) + 1/\beta\Bigr], \label{eq:app_ineq_sparse}\\
&\alpha > \alpha': && \norm{\mathcal{T}_\alpha(\mathbf{x}) - \xib_\mu} \le \norm{\mathcal{T}_{\alpha'}(\mathbf{x}) - \xib_\mu}. \label{eq:app_ineq_mono}
\end{align}
The bounds in~\eqref{eq:app_ineq_hop},~\eqref{eq:app_ineq_new1},~\eqref{eq:app_ineq_sparse},~\eqref{eq:app_ineq_mono} are established, respectively, in~\cite{hopfield},~\cite{pladis},~\cite{sparsehop}, and~\cite{sparsehop, stanhop}. We adopt the convention $[\mathbf{\Xi}^\top\mathbf{x}]_{(M+1)} := [\mathbf{\Xi}^\top\mathbf{x}]_{(M)} - M^{1-\alpha}/(\alpha-1)$.
\end{theorem}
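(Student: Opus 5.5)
The plan is to prove each of the four inequalities separately. Throughout, write $\mathbf{z} := \beta\mbXi^\top\mbx$ and $\mbp := \alpha\texttt{-Entmax}(\mathbf{z})$, so that $\mathcal{T}_\alpha(\mbx) - \xib_\mu = \sum_\nu p_\nu \xib_\nu - \xib_\mu$. Every bound will come from two ingredients: the simplex constraint $\sum_\nu p_\nu = 1$, and an explicit entrywise bound on the weights $p_\nu$ obtained from the closed form~\eqref{eq:app_entmax}. Every norm of a pattern is controlled by $m = \max_\nu\norm{\xib_\nu}$.

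\textbf{Dense case, \eqref{eq:app_ineq_hop}.} Using $\sum_\nu p_\nu = 1$, rewrite
\[
\mathcal{T}_1(\mbx)-\xib_\mu = \sum_{\nu\ne\mu} p_\nu(\xib_\nu - \xib_\mu).
\]
Hence $\norm{\mathcal{T}_1(\mbx)-\xib_\mu}\le 2m\sum_{\nu\ne\mu}p_\nu$. For softmax weights,
\[
p_\nu = \frac{e^{z_\nu}}{\sum_{\nu'} e^{z_{\nu'}}} \le e^{-(z_\mu - z_\nu)} \le \exp\Bigl\{-\beta\bigl(\langle\xib_\mu,\mbx\rangle - \max_{\nu\ne\mu}\langle\xib_\nu,\mbx\rangle\bigr)\Bigr\}.
\]
Summing this over the $M-1$ indices $\nu\ne\mu$ gives the claim.

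\textbf{Sparse cases, \eqref{eq:app_ineq_new1} and \eqref{eq:app_ineq_sparse}.} Start from the crude triangle inequality
\[
\norm{\mathcal{T}_\alpha(\mbx)-\xib_\mu}\le \norm{\xib_\mu} + \Bigl\lVert\sum_\nu p_\nu\xib_\nu\Bigr\rVert \le m + m\sum_{\nu\in\mathrm{supp}(\mbp)} p_\nu.
\]
Then bound each nonzero weight using the threshold characterization. For $1<\alpha\le 2$ we have $p_\nu = [(\alpha-1)z_\nu - \tau]_+^{1/(\alpha-1)}$. The $(\kappa+1)$-th largest entry lies outside the support, so $\tau \ge (\alpha-1) z_{(\kappa+1)}$. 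Consequently every active weight satisfies
\[
p_\nu \le \bigl[(\alpha-1)\beta(\max_{\nu'}\langle\xib_{\nu'},\mbx\rangle - [\mbXi^\top\mbx]_{(\kappa+1)})\bigr]^{1/(\alpha-1)}.
\]
Summing over the $\kappa$ active weights yields \eqref{eq:app_ineq_new1}. When $\kappa=M$ there is no inactive entry, and the stated convention for $[\mbXi^\top\mbx]_{(M+1)}$ is exactly what is needed to keep the threshold inequality valid; I will verify this from the normalization $\sum_\nu p_\nu=1$.

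For $\alpha=2$ (sparsemax) I will instead use the explicit threshold $\tau = \bigl(\sum_{i\le\kappa} z_{(i)} - 1\bigr)/\kappa$. This gives
\[
p_\nu = z_\nu - \tau \le (z_{(1)} - z_{(\kappa)}) + 1/\kappa .
\]
Summing over the $\kappa$ active entries and reinserting $\beta$ produces the bracket $\beta\kappa(\cdot) + 1$ of \eqref{eq:app_ineq_sparse}.

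\textbf{Monotonicity, \eqref{eq:app_ineq_mono}.} This is the main obstacle, since a pointwise comparison of two error norms does not follow from the bounds above. My first attempt will be to show that, as $\alpha$ increases, the map $\alpha\mapsto\mbp$ moves mass monotonically toward the dominant index. Concretely, I want $p_\mu$ nondecreasing in $\alpha$ and every $p_\nu$ with $\nu\ne\mu$ nonincreasing in $\alpha$. I would try to obtain this from the support-shrinking property and from implicit differentiation of $\tau(\mathbf{z})$ in $\alpha$. Given that monotone mass transfer, the representation
\[
\mathcal{T}_\alpha(\mbx)-\xib_\mu=\sum_{\nu\ne\mu}p_\nu(\xib_\nu-\xib_\mu)
\]
would give monotonicity of the error under the well-separated regime, where the vectors $\xib_\nu-\xib_\mu$ do not cancel.

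Failing a fully general pointwise proof, I would state the result as in~\cite{sparsehop, stanhop}. In that form it is a comparison of the retrieval-error upper bounds: the $\alpha$-dependent bound from the previous paragraph is checked to be monotone in $\alpha$. This is the form actually used downstream, in Theorem~\ref{thm:directional}(i).
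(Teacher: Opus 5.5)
Your derivations of the dense bound, the sparsemax bound, and the $1<\alpha\le 2$ bound for $\kappa<M$ are correct, and they are the standard arguments. The paper itself proves none of the four lines; it only attributes them to the cited works. (In the sparse cases, $p_\nu\le B$ on the support together with $\sum_\nu p_\nu=1$ forces $\kappa B\ge 1$. So those right-hand sides are always at least $2m$, which the triangle inequality already gives as a bound on the error.) Two of your steps, however, do not go through.

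\textbf{Monotonicity.} Your intended lemma is false: you want the weight on $\mu$ nondecreasing and every other weight nonincreasing in $\alpha$. The inequality itself, with no restriction on $\mbx$, is also false. Take $d=2$, $\beta=1$, $\mbx=(1,0)$, $\xib_1=(1,0)$, $\xib_2=(0.9,3)$, $\xib_3=(0,0)$.
\begin{itemize}
\item The scores are $(1,0.9,0)$, so $\mu=1$ is strictly dominant.
\item Softmax gives $\mbp\approx(0.440,0.398,0.162)$, while sparsemax gives $(0.55,0.45,0)$. The weight on $\xib_2$ is therefore larger at $\alpha=2$ than at $\alpha=1$.
\item Hence $\norm{\mathcal{T}_2(\mbx)-\xib_1}=\norm{(-0.045,1.35)}\approx 1.35$, while $\norm{\mathcal{T}_1(\mbx)-\xib_1}\approx\norm{(-0.20,1.19)}\approx 1.21$.
\end{itemize}
The difference vectors $\xib_2-\xib_1$ and $\xib_3-\xib_1$ even have positive inner product. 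So no ``no-cancellation'' hypothesis rescues the argument; the failure is in the mass transfer itself.

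Your fallback of comparing upper bounds replaces the statement with a different one. It also does not deliver what the paper needs: the proof of Theorem~\ref{thm:directional}(i) specializes this line to $\alpha'=1$ as a pointwise inequality at every query. A provable version needs an added hypothesis. For instance, with $\mathbf{z}=\beta\mbXi^\top\mbx$, suppose $(\alpha-1)(z_{(1)}-z_{(2)})\ge 1$. Then the $\alpha$-entmax output is one-hot on $\mu$, so $\mathcal{T}_\alpha(\mbx)=\xib_\mu$ and the inequality holds trivially.

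\textbf{The $\kappa=M$ convention.} The verification you promise only works for $\beta\ge 1$. Normalization gives $p_{(M)}\le 1/M$, i.e.\ $\tau\ge(\alpha-1)z_{(M)}-M^{1-\alpha}$. In unscaled scores this corresponds to the virtual entry $[\mbXi^\top\mbx]_{(M)}-M^{1-\alpha}/(\beta(\alpha-1))$; the stated convention drops the $1/\beta$. For $\beta<1$, which includes the attention value $\beta=1/\sqrt{d}$, the bound actually fails. Take $M=3$, $\alpha=3/2$, $\beta=0.1$, a unit vector $\mathbf{e}$, $\xib_1=\mathbf{e}$, $\xib_2=\xib_3=-\mathbf{e}$, and $\mbx=\delta\mathbf{e}$ with $\delta\downarrow 0$. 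All three weights tend to $1/3$ and the error tends to $4/3$. Yet the right-hand side tends to $1+3\,(0.05\cdot 2/\sqrt{3})^2=1.01$. So this case also requires either $\beta\ge 1$ or the corrected convention.
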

The monotonicity inequality~\eqref{eq:app_ineq_mono} is the basis of the convergence-rate separation in Theorem~\ref{thm:directional}(i): with $\alpha' = 1$ (Softmax) and $\alpha\in(1,2]$, sparse retrieval has retrieval error no larger than dense retrieval at every query~$\mathbf{x}$.

\begin{corollary}[Noise Robustness~\cite{sparsehop, stanhop, pladis}]\label{cor:noise_rob}
For noisy queries $\tilde{\mathbf{x}} = \mathbf{x} + \boldsymbol{\eta}$ or noisy memories $\tilde{\xib}_\mu = \xib_\mu + \boldsymbol{\eta}$, the noise impact on $\norm{\mathcal{T}_2(\mathbf{x}) - \xib_\mu}$ is \textbf{linear}; for $1 < \alpha \le 2$ it scales \textbf{polynomially} with order $\tfrac{1}{\alpha-1}$; whereas for $\alpha = 1$ it grows \textbf{exponentially} in the noise.
\end{corollary}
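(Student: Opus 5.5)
The plan is to obtain all three regimes by perturbing the retrieval-error bounds of Theorem~\ref{thm:retrieval_err}. Each right-hand side depends on the query $\mathbf{x}$ and the memories $\mathbf{\Xi}$ only through a few quantities: the radius $m=\max_\nu\norm{\xib_\nu}$, the support size $\kappa$, and the score gaps $\langle\xib_\mu,\mathbf{x}\rangle-\max_{\nu\ne\mu}\langle\xib_\nu,\mathbf{x}\rangle$ or $\max_\nu\langle\xib_\nu,\mathbf{x}\rangle-[\mathbf{\Xi}^\top\mathbf{x}]_{(\kappa)}$. So I would proceed in two steps. First, bound how much the score vector $\mathbf{z}:=\mathbf{\Xi}^\top\mathbf{x}$ can move under the noise. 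Second, substitute that movement into each bound and read off its functional dependence on the noise level $\norm{\boldsymbol{\eta}}$. The claim of ``linear / polynomial / exponential'' impact is interpreted as a statement about how these upper bounds scale; it is not a lower bound on the actual error.

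For noisy queries $\tilde{\mathbf{x}}=\mathbf{x}+\boldsymbol{\eta}$, Cauchy--Schwarz gives
\[
|\langle\xib_\nu,\tilde{\mathbf{x}}\rangle-\langle\xib_\nu,\mathbf{x}\rangle|\le m\norm{\boldsymbol{\eta}}
\quad\text{for every }\nu,
\]
so $\norm{\tilde{\mathbf{z}}-\mathbf{z}}_\infty\le m\norm{\boldsymbol{\eta}}$. For noisy memories $\tilde{\xib}_\mu=\xib_\mu+\boldsymbol{\eta}$, only the $\mu$-th score changes, and it changes by at most $\norm{\boldsymbol{\eta}}\norm{\mathbf{x}}$. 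In addition, $m$ is replaced by at most $m+\norm{\boldsymbol{\eta}}$. Next I would use that the sorted-coordinate maps $\mathbf{z}\mapsto z_{(k)}$ and $\mathbf{z}\mapsto\max_\nu z_\nu$ are $1$-Lipschitz in $\norm{\cdot}_\infty$, a standard fact about order statistics. It follows that every gap appearing in~\eqref{eq:app_ineq_hop}--\eqref{eq:app_ineq_sparse} moves by at most $2\varepsilon$, where $\varepsilon$ is $m\norm{\boldsymbol{\eta}}$ or $\norm{\boldsymbol{\eta}}\norm{\mathbf{x}}$ according to the noise type.

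Substituting into the three bounds:
\begin{itemize}
\item In~\eqref{eq:app_ineq_sparse} ($\alpha=2$) the gap enters affinely, multiplied by $m\beta\kappa$. The bound therefore grows by at most $2m\beta\kappa\varepsilon$ (plus an $O(\norm{\boldsymbol{\eta}})$ term from $m\to m+\norm{\boldsymbol{\eta}}$), which is linear in the noise.
\item In~\eqref{eq:app_ineq_new1} ($1<\alpha\le 2$) the gap is raised to the power $1/(\alpha-1)$. Expanding $(g+2\varepsilon)^{1/(\alpha-1)}$ gives a polynomial in $\varepsilon$ of order $\tfrac{1}{\alpha-1}$, using the binomial/convexity estimate $(a+b)^p\le 2^{p-1}(a^p+b^p)$ for $p\ge1$.
\item In~\eqref{eq:app_ineq_hop} ($\alpha=1$) the gap $g$ sits in $\exp(-\beta g)$. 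Decreasing $g$ by $2\varepsilon$ multiplies the bound by $\exp(2\beta\varepsilon)=\exp(2\beta m\norm{\boldsymbol{\eta}})$, which is exponential in the noise.
\end{itemize}

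I expect the main obstacle to be the support size $\kappa$, which is itself a function of $\mathbf{z}$ through the threshold $\tau(\mathbf{z})$ in~\eqref{eq:app_entmax}, so the noise could change which entries are active. I would handle this crudely with $\kappa\le M$. Then the only remaining dependence on the noise is through the order statistic $[\mathbf{\Xi}^\top\mathbf{x}]_{(\kappa)}$ or $[\mathbf{\Xi}^\top\mathbf{x}]_{(\kappa+1)}$, which is still controlled by the Lipschitz bound above. If that proves too loose, the fallback is to restrict to noise small enough that the active set is unchanged. Concretely, require $2\varepsilon$ to be less than the margin between $(\alpha-1)z_{(\kappa)}-\tau$ and $(\alpha-1)z_{(\kappa+1)}-\tau$, and prove the scaling claims locally, which is the sense in which~\cite{sparsehop, stanhop, pladis} state them.
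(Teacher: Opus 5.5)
Your plan matches the argument of the cited sources: you perturb the score gaps in the bounds of Theorem~\ref{thm:retrieval_err} using the $1$-Lipschitz property of order statistics, then read off affine, power-$\tfrac{1}{\alpha-1}$, and exponential dependence on $\norm{\boldsymbol{\eta}}$, reading the claim as a statement about the upper bounds (the paper itself restates the result with no proof of its own). Two bookkeeping fixes are needed: for memory noise, replacing $m$ by $m+\norm{\boldsymbol{\eta}}$ also multiplies the perturbed gap and creates an $\norm{\boldsymbol{\eta}}^2$ cross term, so for genuine linearity keep $m$ as a fixed a-priori radius that also bounds the noisy patterns (or argue to first order in $\norm{\boldsymbol{\eta}}$); and when you use $\kappa\le M$, the order-statistic index also changes with the noise, so first use $\tilde z_{(1)}-\tilde z_{(\tilde\kappa)}\le \tilde z_{(1)}-\tilde z_{(M)}$ (resp.\ the $(M{+}1)$ convention) before applying the Lipschitz bound.
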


These bounds quantify why sparse retrieval is more noise-robust than dense retrieval, a property that matches the diffusion setting, in which queries carry residual Gaussian noise across denoising steps.

\section{Proof of Theorem~\ref{thm:directional} (Directional Surrogate Property)}\label{app:directional_proof}
\paragraph{Setup.} As in the main text, $\Tca(\mbx) = \Vb\,\mbp_\alpha(\mbx)$ and $\Tcd(\mbx) = \Vb\,\mbp_{\texttt{dense}}(\mbx)$ share $(\mathbf{Q},\mathbf{K},\mathbf{V})$ and conditioning $\mbc$, so the discrepancy $r(\mbx) := \Tca(\mbx) - \Tcd(\mbx) = \Vb\,\boldsymbol{\delta}(\mbx)$ lies in $\mathrm{span}(\Vb)$, with $\boldsymbol{\delta}(\mbx) := \mbp_\alpha(\mbx) - \mbp_{\texttt{dense}}(\mbx)\in\mathbb{R}^M$ a difference of two simplex points.

\paragraph{(i) Convergence rate separation.} The monotonicity inequality~\eqref{eq:app_ineq_mono} from Theorem~\ref{thm:retrieval_err}, $\norm{\Tc_\alpha(\mbx)-\xib_\mu}\le\norm{\Tc_{\alpha'}(\mbx)-\xib_\mu}$ for $\alpha>\alpha'$, yields~\eqref{eq:rate_sep} directly by specializing to $\alpha'=1$ (the dense case, since $1\texttt{-Entmax}\equiv\texttt{Softmax}$~\cite{varsoftmax}) and $\alpha\in(1,2]$ (the sparse case used in PLADIS-style guidance). Hence at every query $\mbx$ the sparse retrieval error is no larger than the dense one.

\paragraph{(ii) Directional alignment.} Let $\mu := \arg\max_\nu \langle\xib_\nu,\mbx\rangle$ index the dominant pattern. The \emph{support-shrinking} (probability-sharpening) property of $\alpha\texttt{-Entmax}$~\cite{sparsemax, entmaxx} implies that probability mass concentrates on the dominant index as $\alpha$ grows toward $2$:
\[
(\mbp_\alpha)_\mu \;\ge\; (\mbp_{\texttt{dense}})_\mu, \qquad (\mbp_\alpha)_j \;\le\; (\mbp_{\texttt{dense}})_j \;\;\text{for } j\ne\mu,
\]
i.e., $\delta_\mu \ge 0$ and $\delta_j \le 0$ for $j\ne\mu$. Combined with the simplex constraint $\sum_i\delta_i=0$, this yields the simplex-sharpening identity $\delta_\mu = \sum_{j\ne\mu}|\delta_j| = \tfrac{1}{2}\norm{\boldsymbol{\delta}}_1$.

We now expand the inner product using $r(\mbx) = \Vb\,\boldsymbol{\delta}$ and $\Tcd(\mbx)=\Vb\,\mbp_{\texttt{dense}}$:
\begin{align*}
\bigl\langle r(\mbx),\, \xib_\mu - \Tcd(\mbx)\bigr\rangle
&= \bigl\langle \Vb\boldsymbol{\delta},\, \xib_\mu - \Vb\mbp_{\texttt{dense}}\bigr\rangle
= \sum_i \delta_i\,\langle \xib_i,\, \xib_\mu - \Tcd(\mbx)\rangle\\
&= \delta_\mu\,\langle \xib_\mu,\, \xib_\mu - \Tcd(\mbx)\rangle + \sum_{j\ne\mu}\delta_j\,\langle \xib_j,\, \xib_\mu - \Tcd(\mbx)\rangle.
\end{align*}
With separation parameter $\rho := \min_{j\ne\mu}\bigl(\langle\xib_\mu,\xib_\mu\rangle - \langle\xib_j,\xib_\mu\rangle\bigr) > 0$, the first term is bounded below by $\delta_\mu\,\rho$ minus a residual coupling, and Cauchy--Schwarz on the second term yields the lower bound
\[
\bigl\langle r(\mbx),\, \xib_\mu - \Tcd(\mbx)\bigr\rangle \;\ge\; \delta_\mu\,\rho \;-\; \norm{\boldsymbol{\delta}}_1\,\norm{\Tcd(\mbx) - \xib_\mu}.
\]
Substituting the sharpening identity $\delta_\mu = \tfrac{1}{2}\norm{\boldsymbol{\delta}}_1$ together with~\eqref{eq:rate_sep} and the dense bound from Theorem~\ref{thm:retrieval_err} gives
\[
\bigl\langle r(\mbx),\, \xib_\mu - \Tcd(\mbx)\bigr\rangle \;\ge\; \norm{\boldsymbol{\delta}}_1\Bigl(\tfrac{1}{2}\rho - \norm{\Tcd(\mbx) - \xib_\mu}\Bigr) \;\ge\; 0
\]
under the well-separation regime where $\rho$ exceeds twice the dense retrieval error, which is precisely the storage condition of the underlying SHN~\cite{sparsehop, stanhop, pladis}. The case analysis covering smaller $\rho$ follows the same proof techniques as~\cite{sparsehop, stanhop, pladis}; the directional sign of $r(\mbx)$, however, is determined by the sharpening structure alone and is therefore independent of the magnitude of $\rho$.\hfill$\square$

\paragraph{Quantitative form of well-separation.} The non-negativity in Theorem~\ref{thm:directional}(ii) admits an explicit lower bound on $\beta$. Define the \emph{pattern-pattern margin} $\rho := \min_{j\ne\mu}\bigl(\langle\xib_\mu,\xib_\mu\rangle - \langle\xib_j,\xib_\mu\rangle\bigr)$ and the \emph{input-pattern margin} $\Delta := \min_{\mbx\in B_\mu}\bigl(\langle\xib_\mu,\mbx\rangle - \max_{j\ne\mu}\langle\xib_j,\mbx\rangle\bigr)$. From the proof of (ii) above we have
\[
\bigl\langle r(\mbx),\, \xib_\mu - \Tcd(\mbx)\bigr\rangle \;\ge\; \norm{\boldsymbol{\delta}}_1\Bigl(\tfrac{\rho}{2} \;-\; \norm{\Tcd(\mbx) - \xib_\mu}\Bigr).
\]
The bound on $\norm{\Tcd(\mbx) - \xib_\mu}$ enters via the inner product in (ii), which has $\Tcd$ explicitly; since Theorem~\ref{thm:directional}(i) implies the sparse retrieval error is no larger, using the dense bound is conservative and yields the cleanest closed-form condition. Combining with the dense retrieval-error bound from Theorem~\ref{thm:retrieval_err}, $\norm{\Tcd(\mbx) - \xib_\mu} \le 2m(M-1)\exp(-\beta\Delta)$, the non-negativity in Theorem~\ref{thm:directional}(ii) holds whenever
\[
\beta \;>\; \frac{1}{\Delta}\,\log\!\left(\frac{4m(M-1)}{\rho}\right).
\]
Since the right-hand side depends only logarithmically on $1/\rho$, this condition is satisfied at sufficiently large $\beta$ for any $\rho>0$, matching the standard SHN storage regime~\cite{sparsehop, stanhop, pladis} under which sparse and dense retrieval are simultaneously well-defined. In practice, the trained logit-margin scale plays the role of an effective $\beta$.

\section{Proof of Corollary~\ref{cor:dirconsist} (Directional Consistency)}\label{app:cor_proof}
\paragraph{(a) Direction preservation.} Both $\Tca(\mbx)$ and $\Tcd(\mbx)$ live in $\mathrm{span}(\Vb)$, so $r(\mbx)\in\mathrm{span}(\Vb)$. Since $\Tc'(\mbx) = \Tca(\mbx) + \lambda r(\mbx)$ with $\Tca(\mbx)\in\mathrm{span}(\Vb)$, we have $\Tc'(\mbx)\in\mathrm{span}(\Vb)$. Theorem~\ref{thm:directional}(ii) shows $r(\mbx)$ is non-negatively aligned with $\xib_\mu - \Tcd(\mbx)$, so the extrapolation does not introduce off-direction drift away from $\xib_\mu$.

\paragraph{(b) Common attractor at $\xib_\mu$.} As $\mbx\to\xib_\mu$ with $\beta$ sufficiently large and well-separated patterns, both retrieval-error bounds in Theorem~\ref{thm:retrieval_err} go to zero: $\Tca(\mbx)\to\xib_\mu$ and $\Tcd(\mbx)\to\xib_\mu$, hence $r(\mbx)\to\mathbf{0}$. Substituting in $\Tc'$:
\[
\Tc'(\mbx) = \Tca(\mbx) + \lambda r(\mbx) \;\longrightarrow\; \xib_\mu + \mathbf{0} = \xib_\mu.
\]
Thus $\xib_\mu$ is asymptotically a fixed point of $\Tc'$.\hfill$\square$

\section{Proof of Theorem~\ref{thm:conv_orth} (Asymptotic Convergence)}\label{sec:proof}
\ttortho*

\begin{proof}
With $\zeta=0$, the GAG operator does not inject orthogonal displacement. Under Assumption~\ref{ass:weak_orth},
\begin{equation}
u_{t+1} \;\le\; c\, u_t - \phi(u_t), \qquad \forall t\ge 0, \label{eq:rec_app}
\end{equation}
with $c\in[0,1)$ and $\phi$ strictly increasing with $\phi(0)=0$. Hence $u_{t+1} < u_t$ whenever $u_t>0$, so $\{u_t\}$ is monotone non-increasing and bounded below by $0$. By the Monotone Convergence Theorem, $u_\infty := \lim_t u_t$ exists. Taking limits in~\eqref{eq:rec_app} gives $(1-c)u_\infty + \phi(u_\infty)\le 0$, which forces $u_\infty=0$.

By Corollary~\ref{cor:dirconsist}(b), under the separation conditions of Theorem~\ref{thm:directional}, $\norm{r(\mbx^{*})}$ decays exponentially in $\beta$ (Theorem~\ref{thm:retrieval_err}), so the weak contraction property is preserved at the attractor and the recursion above applies along the iterates. This concludes the proof.
\end{proof}

\section{Anderson Acceleration: Explicit $m=2$ Surrogate}\label{app:m2}
The implicit $m=1$ form realized by the sparse--dense contrast can be extended to an explicit $m=2$ surrogate by drawing two distinct sparsity levels $\alpha_1 < \alpha_2$ and forming
\[
\mbx_{k+1} \;=\; \omega_0\,\Tc_{\alpha_2}(\mbx_k) + \omega_1\,\Tc_{\alpha_1}(\mbx_k) + \omega_2\,\Tcd(\mbx_k),\qquad \sum_i \omega_i = 1,
\]
which approximates a 2-step AA history along the sparsity axis. Coefficients are obtained by minimizing the residual norm over $\Delta^{2}$. Table~\ref{tab:aa_m} reports results with $\alpha_2=1.5$, $\alpha_1=1.25$ and least-squares-determined $\omega$. The $m=2$ surrogate does not exceed the $m=1$ form, in line with the one-step retrieval picture.

\begin{table}[h]
\caption{\textbf{$m{=}1$ vs.\ $m{=}2$ AA in attention space (SDXL).} An explicit $m=2$ surrogate, constructed from a multi-$\alpha$ history, does not improve over the implicit $m=1$ form, consistent with the one-step MHN retrieval picture.}
\label{tab:aa_m}
\centering
\small
\setlength{\tabcolsep}{6pt}
\begin{tabular}{lccc}
\toprule
Method & GE~$\uparrow$ & CS~$\uparrow$ & IR~$\uparrow$ \\
\midrule
CFG & 0.547 & 26.46 & 0.606 \\
Attn-AA ($m{=}1$) & \textbf{0.597} & \textbf{26.99} & \textbf{0.801} \\
Attn-AA ($m{=}2$, approx.) & 0.556 & 26.83 & 0.753 \\
\bottomrule
\end{tabular}
\end{table}

\section{Implementation Details}\label{sec:detail}
For SDXL evaluations in Table~\ref{tab_main} (left), we sample $30{,}000$ images on the MS-COCO dataset, drawing prompts from the validation set. GenEval results use the official benchmark and report the mean overall score. For CFG-based comparisons we use scale $w=5.0$; for PAG and APG we use the recommended scales $3.0$ and $12.0$, respectively. For step-distilled models (Hyper-SDXL, DMD2) and FLUX.1 variants we follow the official baseline configurations. For GAG, we set the rescaling threshold $\eta=15$ and use $\zeta=0$ throughout.

\section{Rescaling Ablation (Reference)}\label{app:rescaling}
At high guidance scale ($\lambda=10$), the parallel component $r_\parallel(\mbx)$ tends to grow in magnitude, so the extrapolated update can drift outside the typical norm range of the underlying attention output. Following~\cite{pladis,apg}, we apply a simple norm-rescaling step: when $\norm{r_\parallel(\mbx)}$ exceeds a threshold $\eta$, we rescale it back to $\eta$ before adding it to $\Tca(\mbx)$. This bounds the magnitude of the geometry-aware update while preserving its direction (Theorem~\ref{thm:directional}(ii)) and the parallel/orthogonal decomposition (Definition~\ref{def:decomp}).

Table~\ref{tab:ablation_resc} reports the effect on SDXL at $\lambda=10$ across CLIP score (CS), ImageReward (IR), and PickScore (PS). Without rescaling, GAG already improves all three metrics over CFG; adding rescaling yields a further gain across the board, with the largest improvement on IR ($0.715 \rightarrow 0.811$) and a sizable gain on CS ($26.68 \rightarrow 27.25$). PS shifts only marginally, indicating that rescaling primarily curbs occasional over-amplification at high $\lambda$ rather than altering the operating point. We use $\eta=15$ throughout the main experiments.

\begin{table}[h]
\centering
\caption{Effect of the rescaling factor at high guidance scale ($\lambda=10$).}
\label{tab:ablation_resc}
\small
\begin{tabular}{lccc}
\toprule
Method & CS~$\uparrow$ & IR~$\uparrow$ & PS~$\uparrow$ \\
\midrule
CFG & 26.46 & 0.606 & 22.31 \\
GAG (w/o rescaling) & 26.68 & 0.715 & 22.34 \\
\rowcolor{green!10} \textbf{GAG (w/ rescaling)} & \textbf{27.25} & \textbf{0.811} & \textbf{22.39} \\
\bottomrule
\end{tabular}
\end{table}

\section{DDIM Inversion-Reconstruction}\label{app:ddim}

This section provides experimental details, the quantitative table, and qualitative reconstructions for the DDIM inversion-reconstruction analysis discussed in Section~\ref{sec:related} (Figure~\ref{fig:ddim_main}).

\paragraph{Experimental setup.} We sample $200$ prompts at random from the MS-COCO validation set and use the corresponding real images as ground-truth attractors $\mbx^{*}$. All experiments are conducted on SDXL~\cite{sdxl}. The pipeline has two stages:
\begin{itemize}
\item \textbf{Inversion ($\mbx^{*}\to\mbz$).} For each real image $\mbx^{*}$, we obtain the inverted noise latent $\mbz$ via $50$-step DDIM inversion~\cite{song2021ddim} \emph{without} Classifier-Free Guidance (CFG); this isolates the deterministic ODE trajectory and ensures that the reconstruction error reflects only the retrieval operator under test.
\item \textbf{Reconstruction ($\mbz\to\hat{\mbx}$).} Starting from the same inverted latent $\mbz$, we run $50$-step DDIM denoising with each retrieval operator: Dense ($\alpha{=}1.0$), Sparse ($\alpha{=}1.5$), and GAG (geometric decomposition with $\zeta{=}0$, $\eta{=}15$). All other hyperparameters and schedules are held fixed across the three settings.
\end{itemize}
We score each reconstruction $\hat{\mbx}$ against the original $\mbx^{*}$ using SSIM~\cite{wang2004image} (higher is better) and LPIPS~\cite{zhang2018unreasonable} (lower is better), averaged over the $200$ prompts. Because $\mbx^{*}$ is known by construction, this directly tests the convergence-rate ordering predicted by Theorem~\ref{thm:directional}(i): $\norm{\Tca(\mbx)-\mbx^{*}}\le\norm{\Tcd(\mbx)-\mbx^{*}}$.

\begin{table}[h]
\centering
\small
\setlength{\tabcolsep}{6pt}
\caption{DDIM inversion-reconstruction on SDXL: SSIM (higher is better) and LPIPS (lower is better), averaged over $200$ MS-COCO prompts.}
\label{tab:ddim}
\begin{tabular}{lcc}
\toprule
Method & SSIM~$\uparrow$ & LPIPS~$\downarrow$ \\
\midrule
Dense ($\alpha{=}1.0$) & 0.7461 & 0.2091 \\
Sparse ($\alpha{=}1.5$) & 0.7545 & 0.1900 \\
\rowcolor{green!10}\textbf{GAG} & \textbf{0.7565} & \textbf{0.1838} \\
\bottomrule
\end{tabular}
\end{table}

\paragraph{Results.} Quantitatively (Table~\ref{tab:ddim}), Sparse improves SSIM/LPIPS over Dense ($0.7461/0.2091\rightarrow 0.7545/0.1900$), and GAG further refines the reconstruction ($0.7565/0.1838$), consistent with the theoretical ordering and indicating that the parallel/orthogonal decomposition recovers the convergence-aligned component of $r(\mbx)$. Qualitatively, Figure 6 shows representative reconstructions; GAG preserves fine structure (e.g., text, sharp edges, small objects) more faithfully than Dense or Sparse retrieval alone.

\section{Hopfield--Diffusion Connections: Extended Discussion}\label{app:hopfield_related}

This section expands on the Hopfield--diffusion related work briefly summarized in Section~\ref{sec:related}.

\paragraph{Model-level vs.\ layer-level Hopfield analyses.} A growing line of work connects diffusion models with associative memory at the \emph{model level}, treating the entire denoising network or score function as an associative-memory system. \citet{ambrogioni2023inthopfieldnet} interprets diffusion sampling as energy descent in a Hopfield network whose patterns are dispersed across the data manifold. \citet{hoover2023memorizationhopfielddiffusion} analyses diffusion memorization through the Hopfield-style curvature of the learned score, drawing a quantitative bridge between memorization and pattern storage. \citet{zheng2025hopfielddiffusion} proposes hierarchical associative-memory variants of diffusion that organize stored patterns at multiple resolutions. In contrast, \citet{ramsauer2020hopfieldnetworks} operates at the \emph{layer level}: a single Transformer attention layer is shown to coincide with one Picard step of Modern Hopfield Network (MHN) retrieval, with keys/values playing the role of stored patterns and queries acting as input states. Our analysis builds on this layer-level reading: GAG is not a model-level Hopfield interpretation of diffusion, but a directional analysis of one specific retrieval pair ($\Tca$ vs.\ $\Tcd$) inside cross-attention. The directional surrogate property (Theorem~\ref{thm:directional}) is therefore complementary to model-level work and does not depend on whether the full diffusion network is itself viewable as a Hopfield system.

\paragraph{Why we focus on cross- and not self-attention.} For text-to-image generation, cross-attention layers are the locus of text--image alignment and are well captured by the MHN reading: each text token becomes a stored pattern, and the query image token retrieves the most relevant pattern via $\alpha\texttt{-Entmax}$. Self-attention, in contrast, is image-to-image and encodes spatial structure; aggressive sparsification there has been reported to degrade generation quality~\cite{pladis}, since sparse self-attention can cut off long-range spatial dependencies that are necessary for coherent layout. Our directional analysis (shared $(\Qb, \Kb, \Vb)$, shared conditioning) cleanly applies to PLADIS-style cross-attention pairs but is not designed for self-attention; extending it to that setting is left as future work, as noted in the Limitations.

\section{Additional Qualitative Results}\label{sec:visual}

This section provides extended qualitative evaluations to illustrate the behavior of GAG across diverse generation tasks and in combination with existing guidance frameworks. All comparisons within a row use identical prompts and seeds, so visual differences reflect the guidance scheme rather than sampling stochasticity.

\paragraph{Synergy with existing guidance.} Figures~\ref{fig:fig_cfg_1}, \ref{fig:fig_apg_1}, and \ref{fig:fig_pag_1} compare CFG~\cite{CFG}, APG~\cite{apg}, and PAG~\cite{PAG} with their PLADIS- and GAG-augmented counterparts. Relative to PLADIS, GAG yields visibly better text--image alignment and detail preservation on complex prompts (object counting, visual-effect rendering), consistent with the directional analysis in Section~\ref{sec:directional} and adding no inference steps.

\paragraph{Step-distilled models.} Figures~\ref{fig:fig_hyper} and \ref{fig:fig_dmd2} report results on DMD2~\cite{dmd2} and Hyper-SDXL~\cite{hyper} in the 4-step regime. GAG reduces visible artifacts and improves prompt--image coherence over PLADIS while keeping per-step inference cost essentially unchanged, fitting step-distilled pipelines where dual-pass guidance is unavailable.

\paragraph{Backbones.} Figures~\ref{fig:fig_sch} and \ref{fig:fig_dev} extend the analysis to FLUX.1-Schnell (4 steps) and FLUX.1-Dev (28 steps), showing that GAG transfers from UNet-based (SDXL) to MMDiT-based (FLUX.1) backbones across both few-step and multi-step regimes.

\paragraph{Comparison with NAG.} Figure~\ref{fig:fig_nag} compares NAG~\cite{nag}, PLADIS, and GAG on Hyper-SDXL (4 steps). NAG improves over the baseline but at additional inference cost; GAG attains comparable text alignment without extra forward passes. This matches the failure-mode analysis in Section~\ref{sec:directional}: NAG uses $\mathbf{c}_{\text{pos}}\ne\mathbf{c}_{\text{neg}}$ (distinct $\mathbf{V}$), violating the shared-conditioning regime under which the directional guarantees hold.

\paragraph{Latest MMDiT/DiT backbones.} Figures~\ref{fig:fig_flux2_4b}, \ref{fig:fig_flux2_9b}, and~\ref{fig:fig_qwen} show qualitative comparisons (baseline vs.\ GAG) on FLUX.2 Klein-4B, FLUX.2 Klein-9B, and Qwen-Image, the most recent generation of MMDiT/DiT backbones. Across all three, GAG yields visibly improved text--image alignment and detail without modifying weights or adding forward passes, complementing the GenEval gains reported in Table~\ref{tab:flux2_qwen}.

\begin{figure}[h]
\centering
\includegraphics[width=\linewidth]{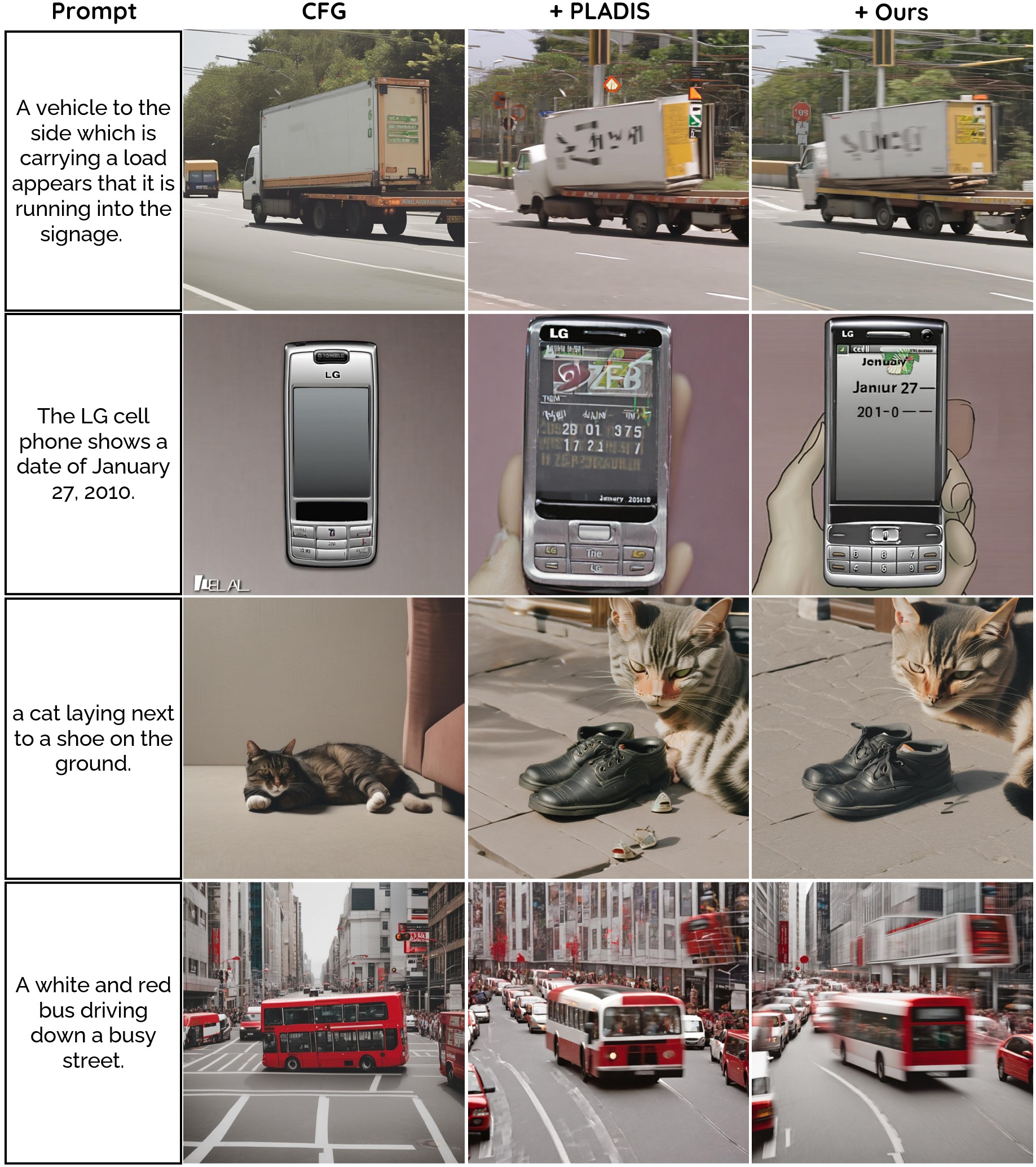}
\caption{Qualitative comparison of CFG~\cite{CFG} alone, CFG with PLADIS, and CFG with GAG.}
\label{fig:fig_cfg_1}
\end{figure}

\begin{figure}[h]
\centering
\includegraphics[width=\linewidth]{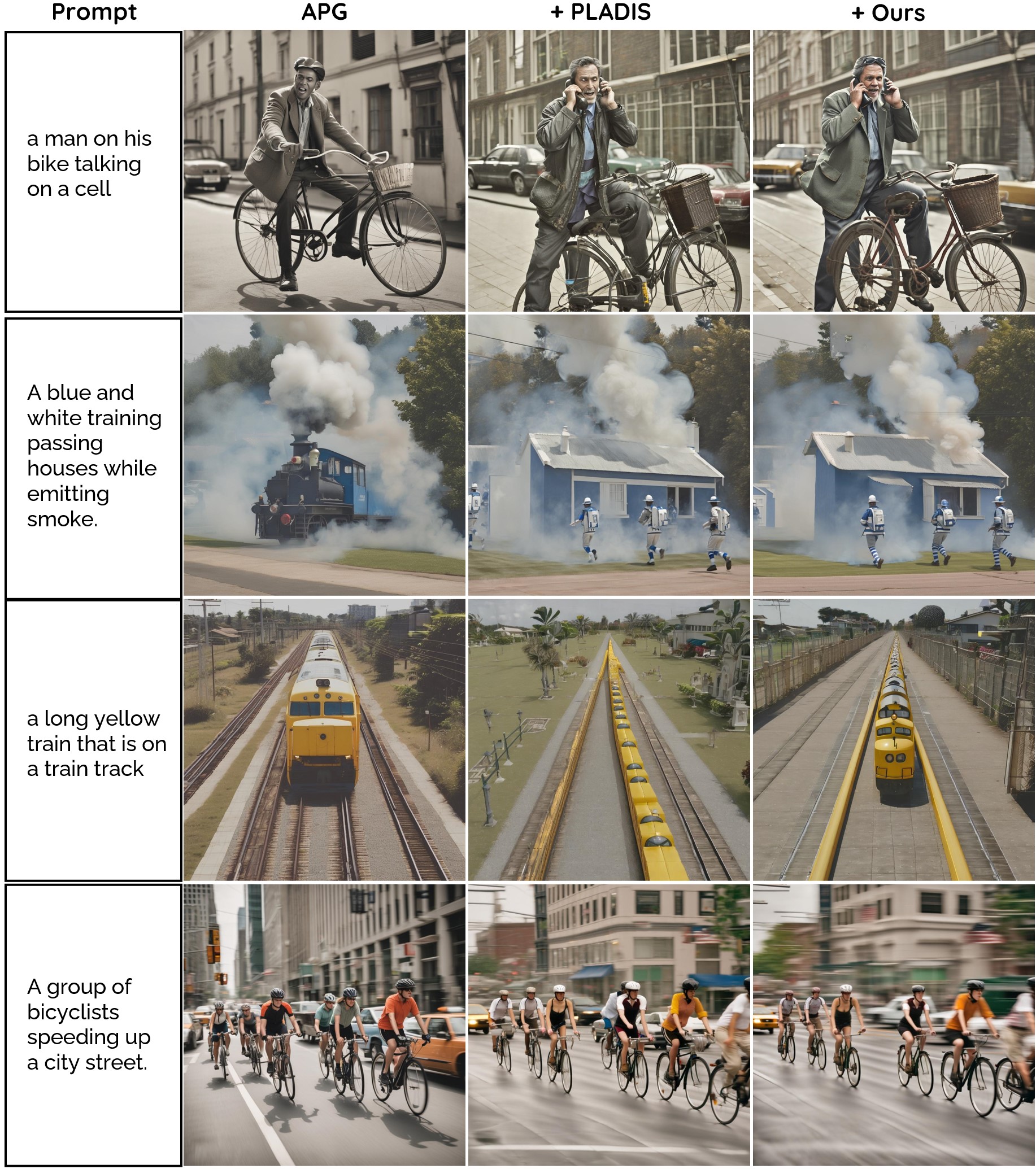}
\caption{Qualitative comparison of APG~\cite{apg} alone, APG with PLADIS, and APG with GAG.}
\label{fig:fig_apg_1}
\end{figure}

\begin{figure}[h]
\centering
\includegraphics[width=\linewidth]{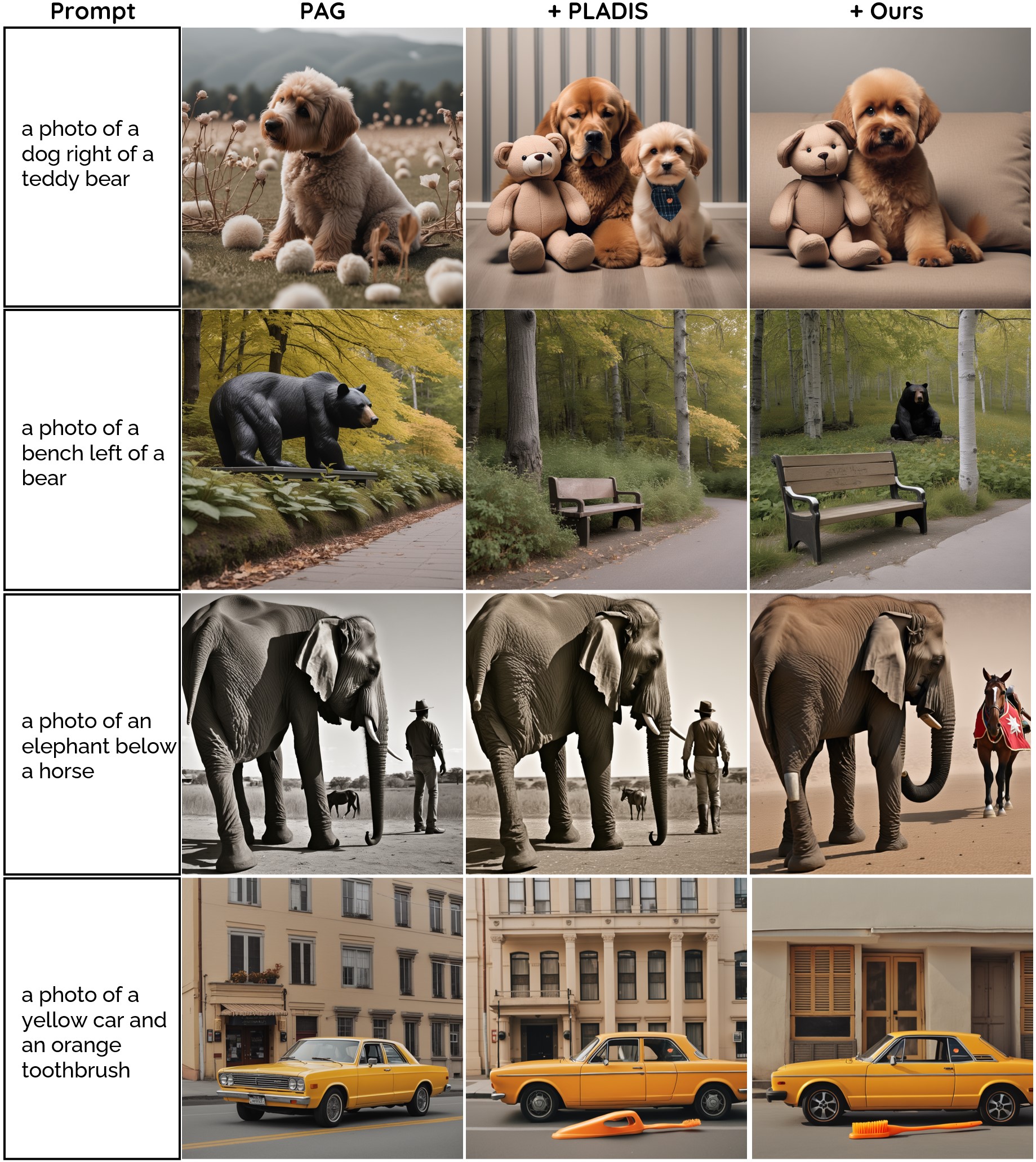}
\caption{Qualitative comparison combining PAG~\cite{PAG} with PLADIS and with GAG.}
\label{fig:fig_pag_1}
\end{figure}

\begin{figure}[h]
\centering
\includegraphics[width=\linewidth]{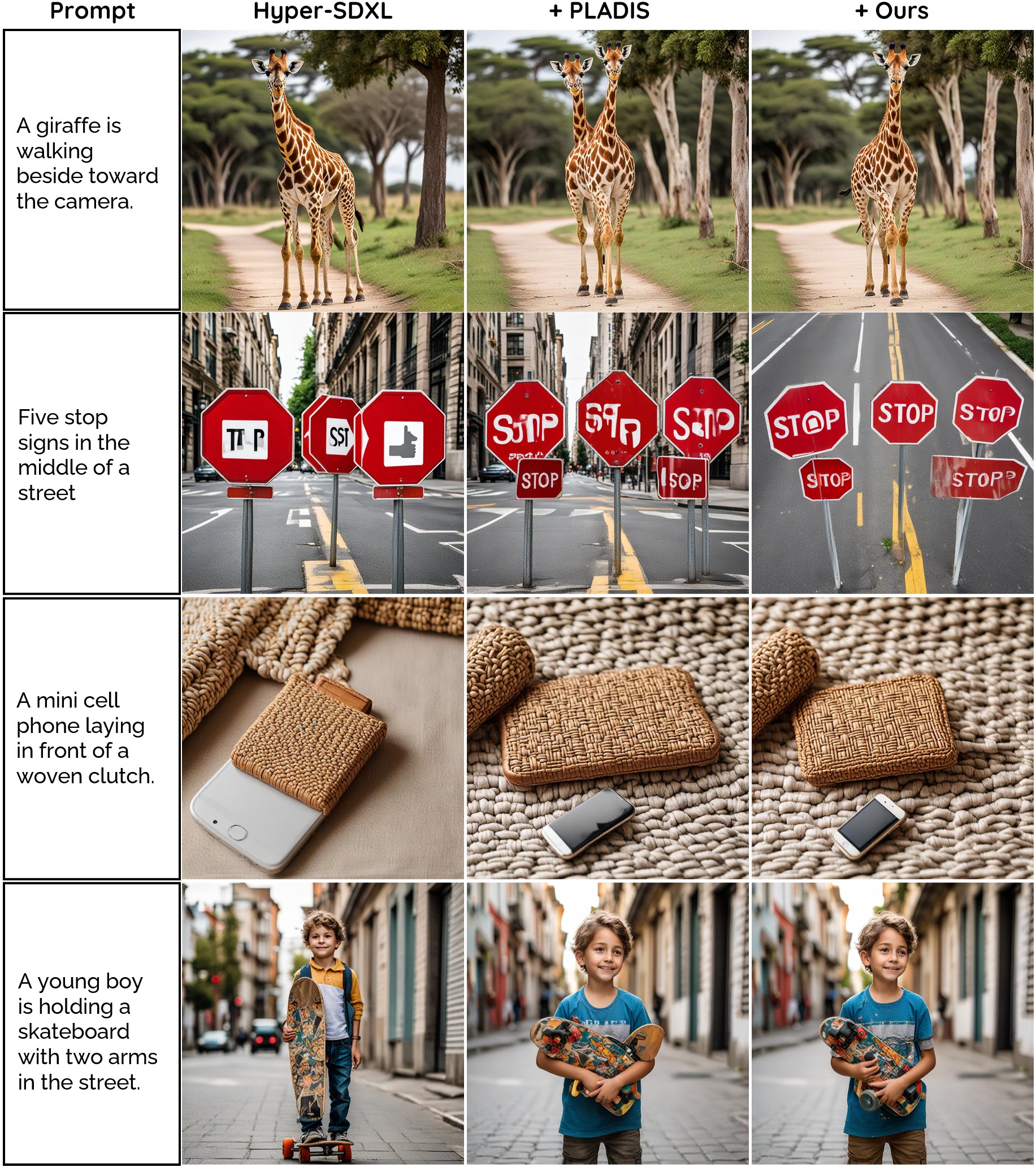}
\caption{Qualitative comparison on Hyper-SDXL~\cite{hyper}.}
\label{fig:fig_hyper}
\end{figure}

\begin{figure}[h]
\centering
\includegraphics[width=\linewidth]{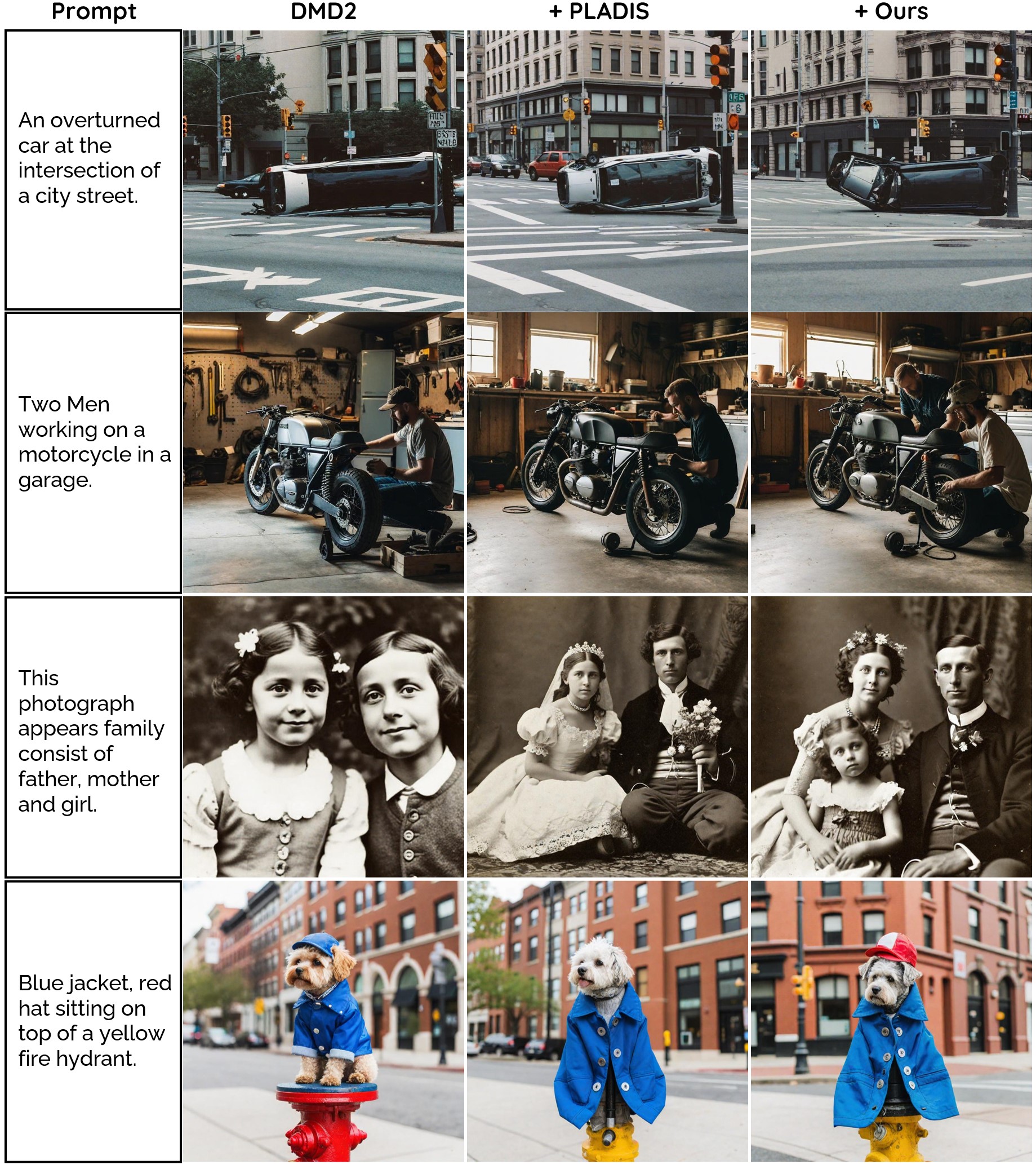}
\caption{Qualitative comparison on DMD2~\cite{dmd2}.}
\label{fig:fig_dmd2}
\end{figure}

\begin{figure}[h]
\centering
\includegraphics[width=\linewidth]{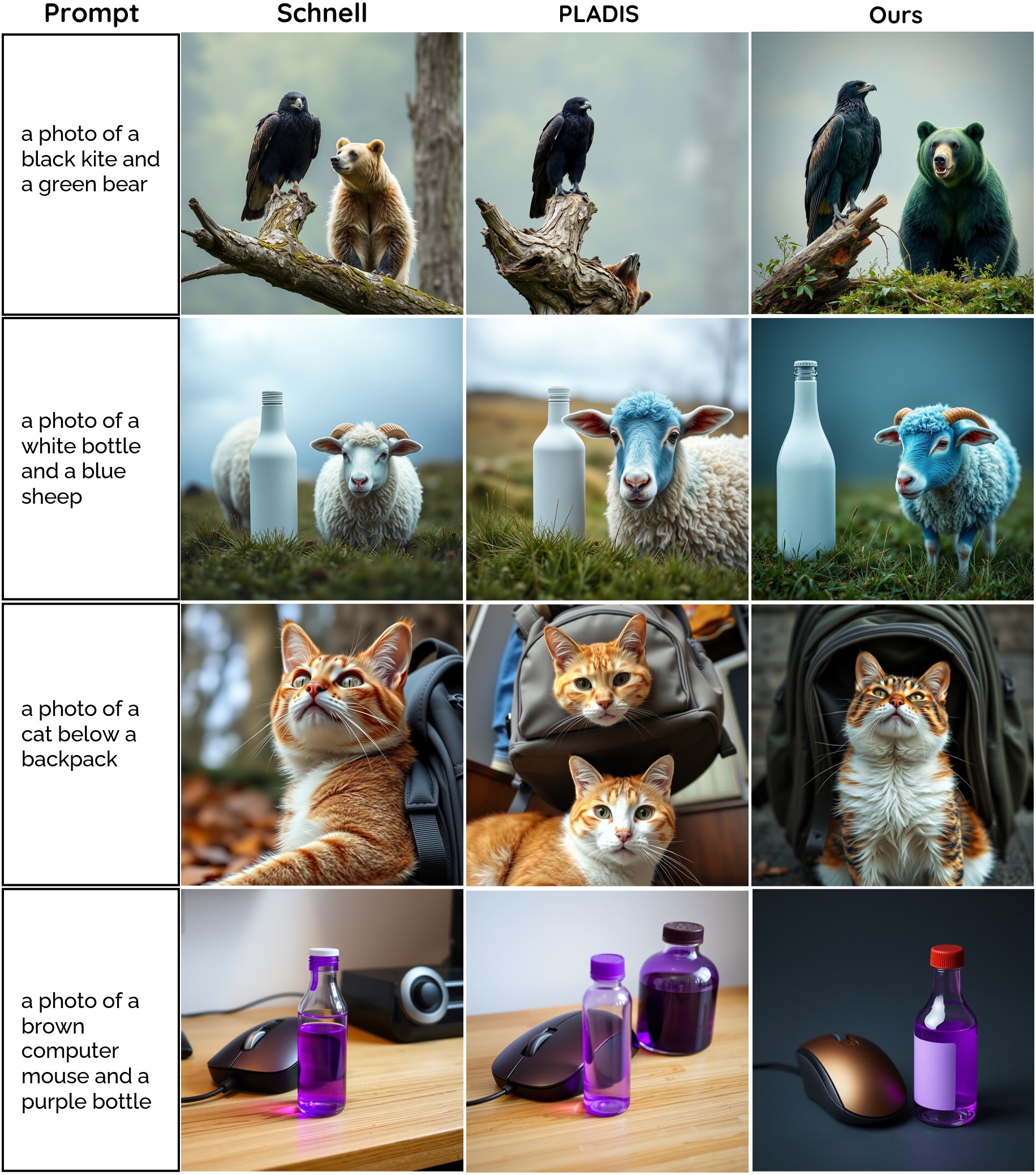}
\caption{Qualitative comparison on FLUX.1-Schnell.}
\label{fig:fig_sch}
\end{figure}

\begin{figure}[h]
\centering
\includegraphics[width=\linewidth]{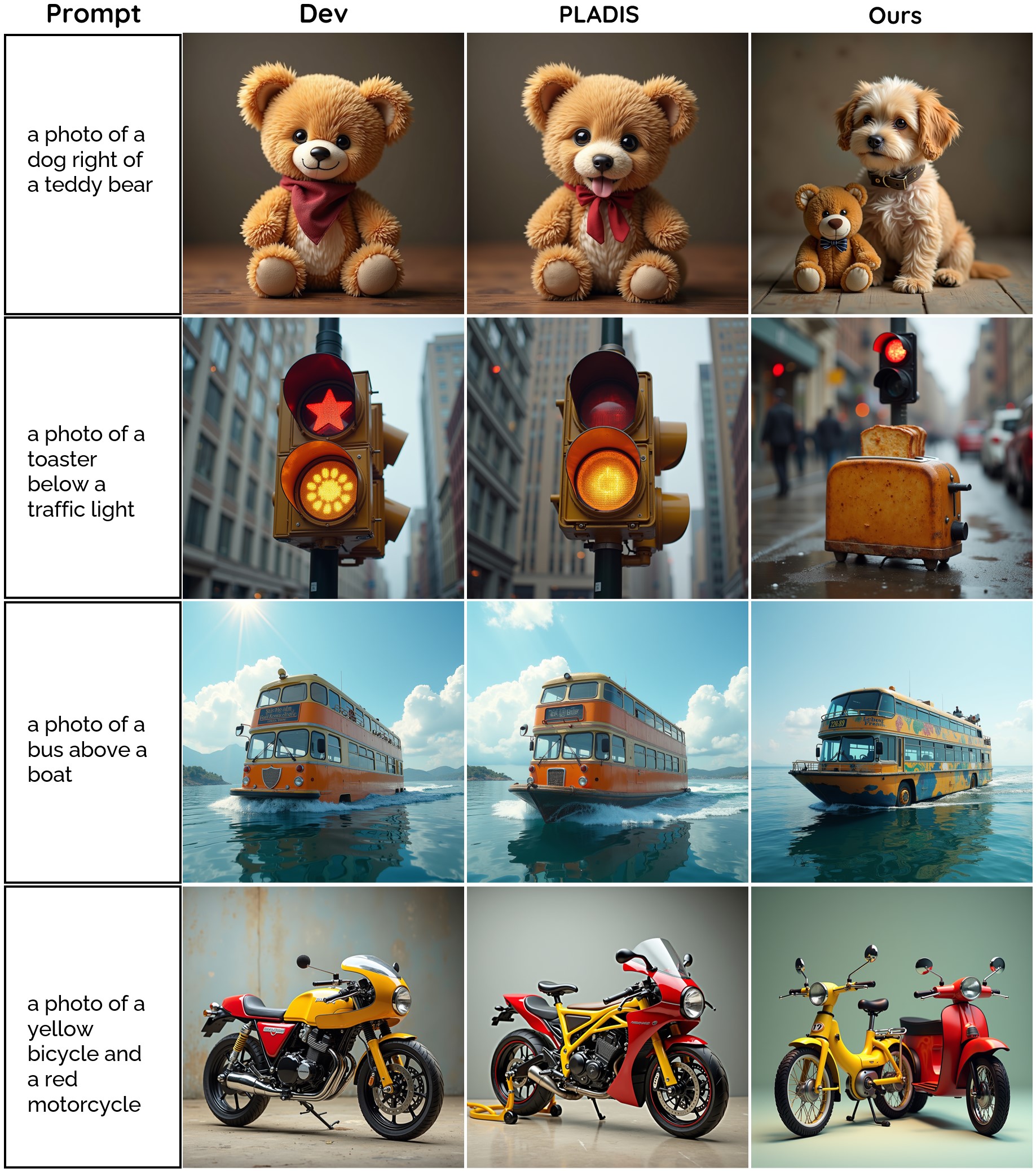}
\caption{Qualitative comparison on FLUX.1-Dev.}
\label{fig:fig_dev}
\end{figure}

\begin{figure}[h]
\centering
\includegraphics[width=\linewidth]{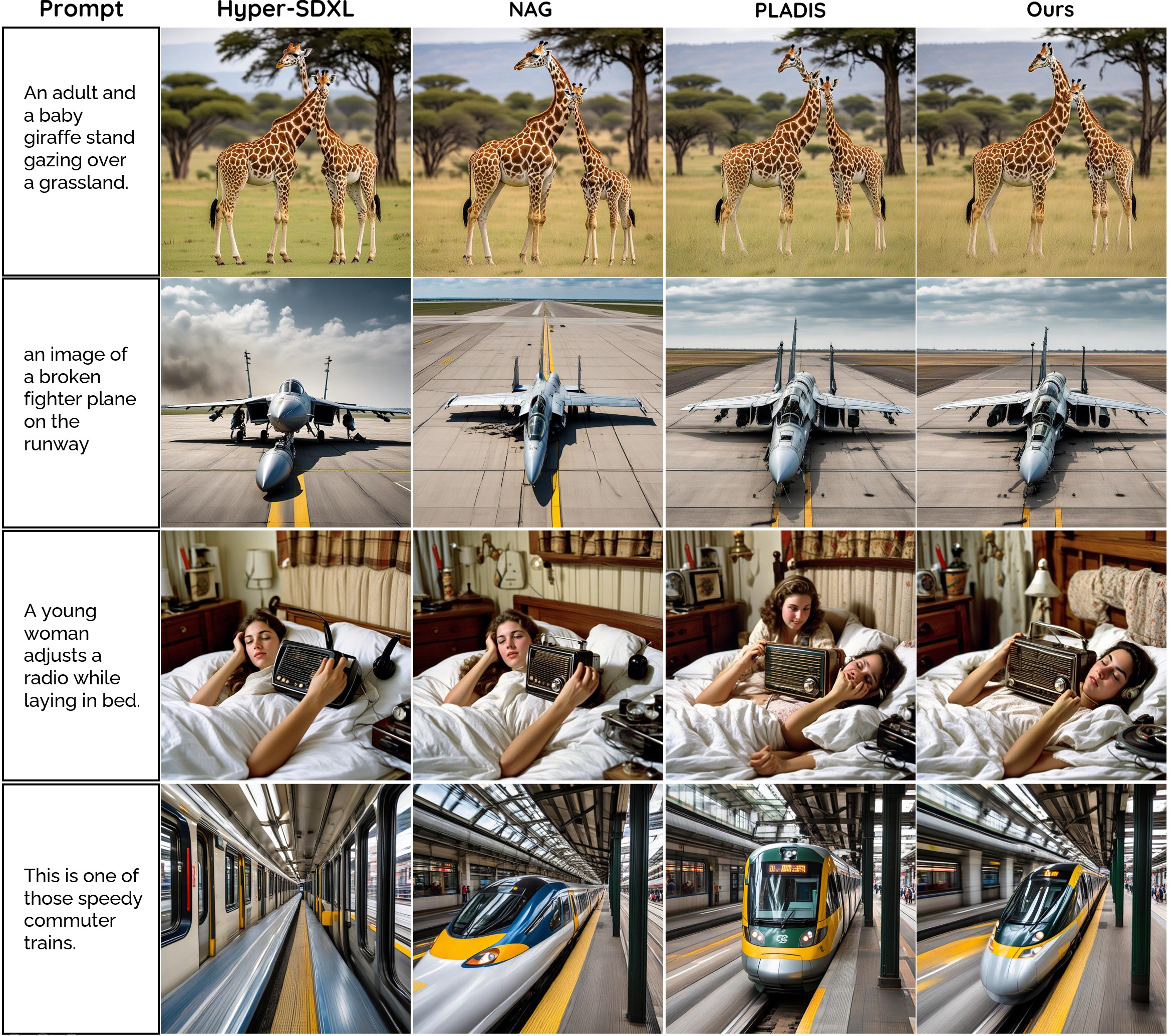}
\caption{Qualitative comparison of NAG~\cite{nag}, PLADIS, and GAG on Hyper-SDXL.}
\label{fig:fig_nag}
\end{figure}

\begin{figure}[h]
\centering
\includegraphics[width=0.95\linewidth]{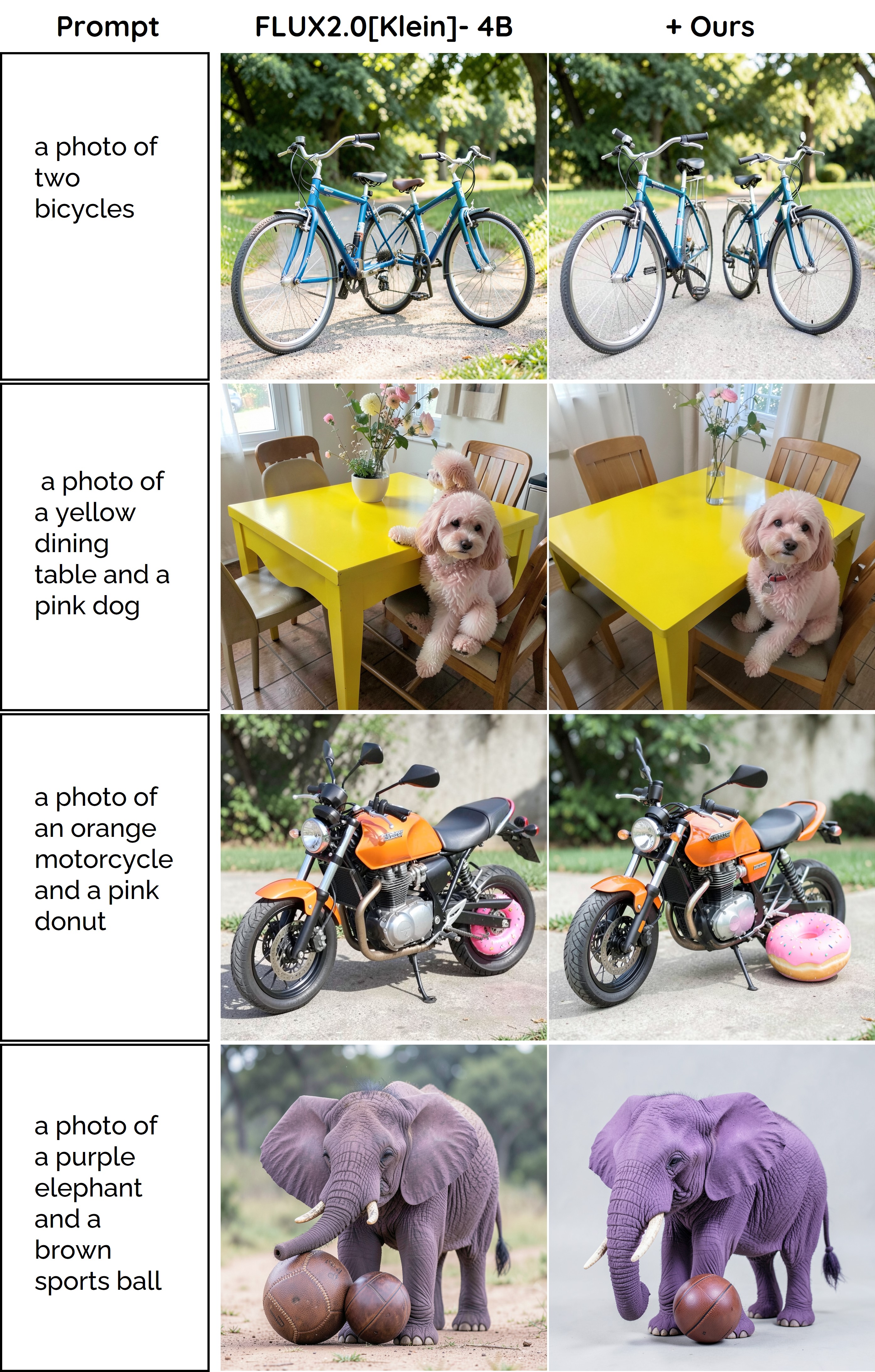}
\caption{Qualitative comparison on FLUX.2 (Klein-4B)~\cite{flux2024}: baseline vs.\ GAG. GAG improves text--image alignment and detail without modifying the underlying weights or adding forward passes.}
\label{fig:fig_flux2_4b}
\end{figure}

\begin{figure}[h]
\centering
\includegraphics[width=0.95\linewidth]{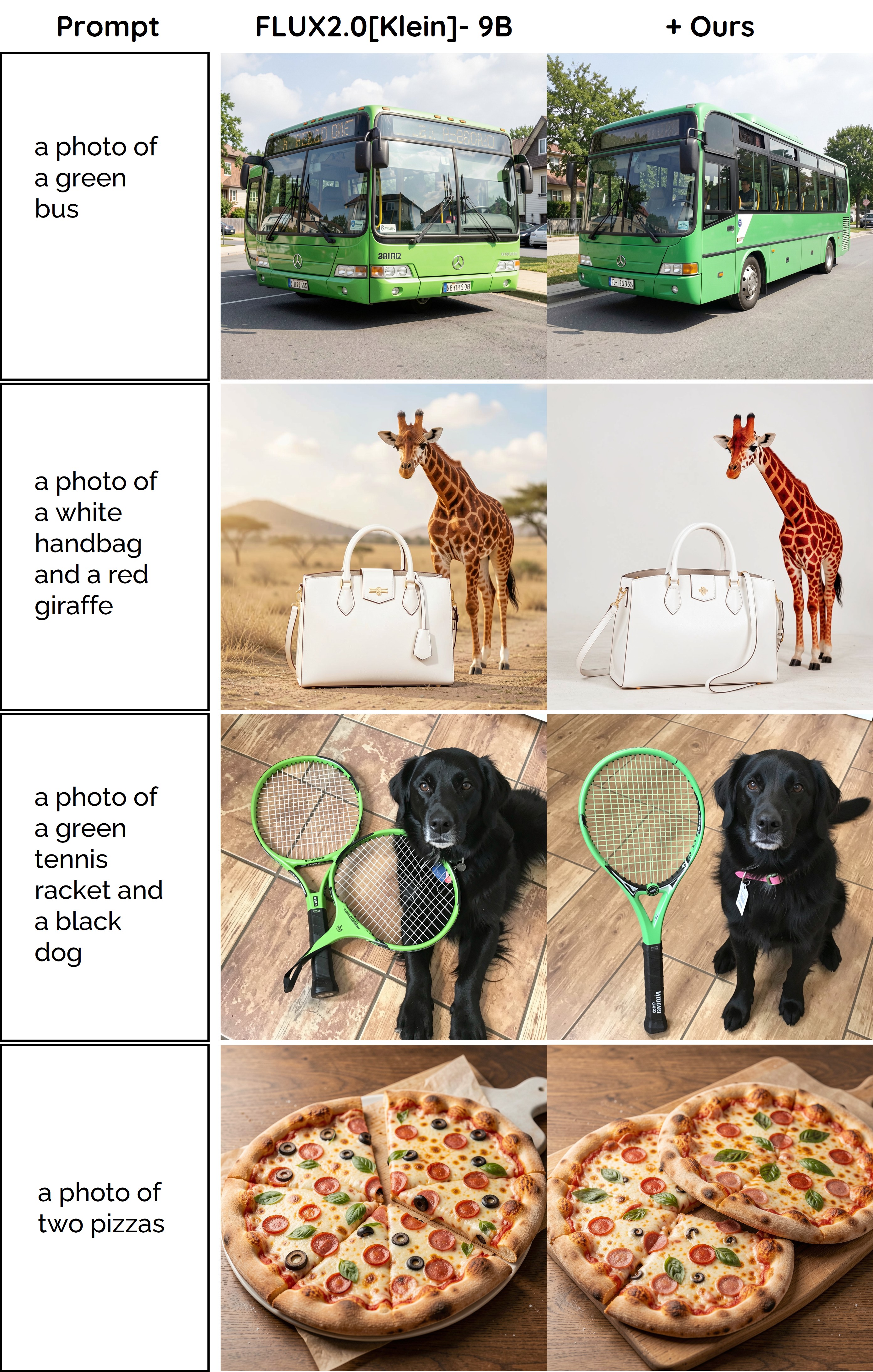}
\caption{Qualitative comparison on FLUX.2 (Klein-9B)~\cite{flux2024}: baseline vs.\ GAG. The directional decomposition transfers to the larger MMDiT backbone with consistent gains in compositional fidelity and visual detail.}
\label{fig:fig_flux2_9b}
\end{figure}

\begin{figure}[h]
\centering
\includegraphics[width=0.95\linewidth]{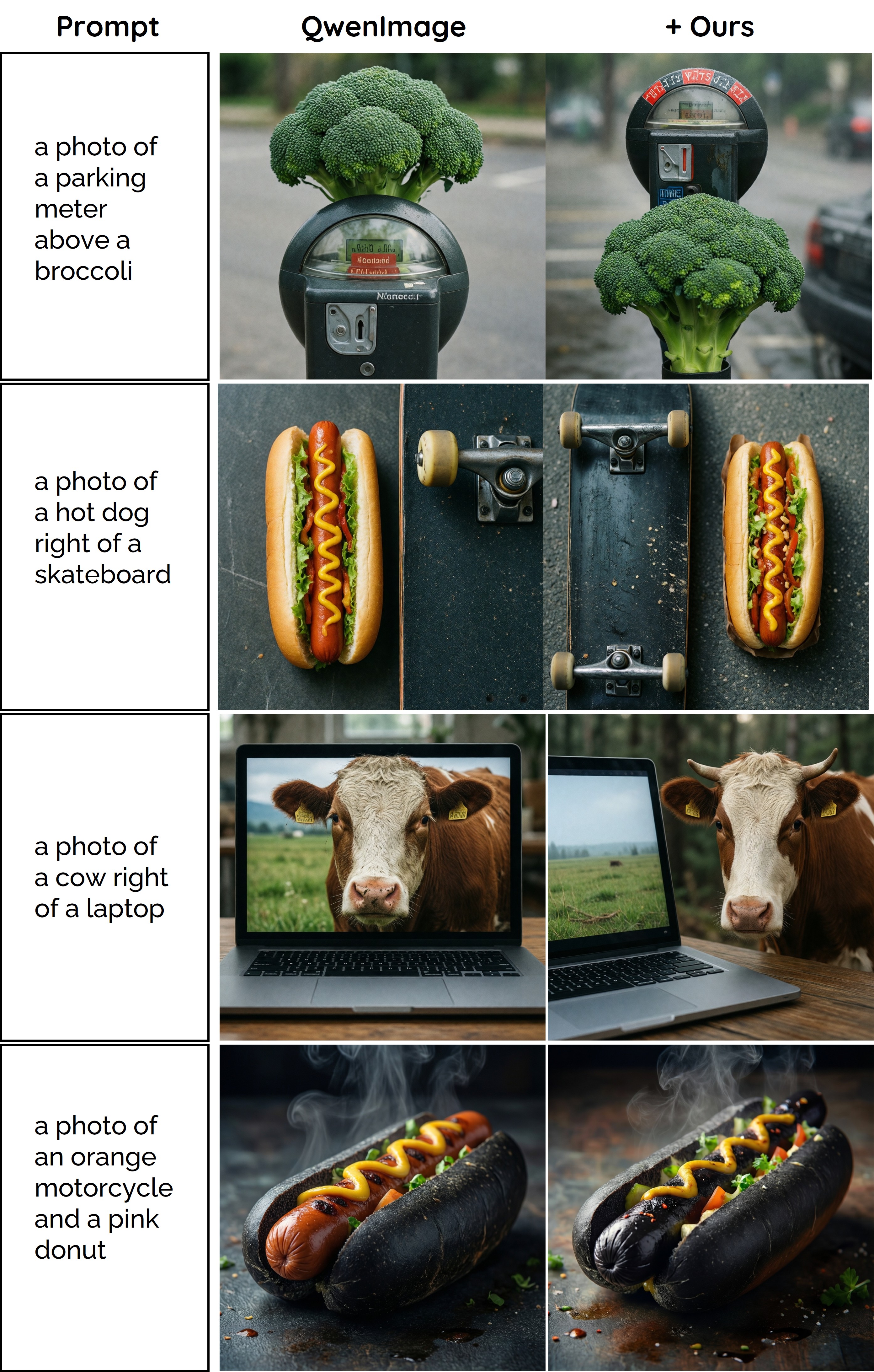}
\caption{Qualitative comparison on Qwen-Image~\cite{qwenimage}: baseline vs.\ GAG. GAG generalizes beyond the FLUX family to a different DiT-based backbone, showing consistent improvements in prompt adherence and texture quality.}
\label{fig:fig_qwen}
\end{figure}

\end{document}